\documentclass{article}

\usepackage{microtype}
\usepackage{graphicx}
\usepackage{subcaption}
\usepackage{booktabs} 
\usepackage[table]{xcolor}

\usepackage{hyperref}

\usepackage[preprint]{icml2026}

\usepackage{amsmath}
\usepackage{amssymb}
\usepackage{mathtools}
\usepackage{amsthm}

\usepackage[capitalize,noabbrev]{cleveref}

\theoremstyle{plain}
\newtheorem{theorem}{Theorem}[section]
\newtheorem{proposition}[theorem]{Proposition}
\newtheorem{lemma}[theorem]{Lemma}
\newtheorem{corollary}[theorem]{Corollary}
\theoremstyle{definition}
\newtheorem{definition}[theorem]{Definition}
\newtheorem{assumption}[theorem]{Assumption}
\theoremstyle{remark}
\newtheorem{remark}[theorem]{Remark}

\newcommand{\R}{\mathbb{R}}
\newcommand{\one}{\mathbf{1}}
\newcommand{\relu}{\operatorname{ReLU}}
\newcommand{\norm}[1]{\left\lVert#1\right\rVert}

\newcommand{\pos}[1]{\left(#1\right)_{+}}
\newcommand{\vpos}[1]{\left(#1\right)_{+}}

\newcommand{\res}{e}

\usepackage[textsize=tiny]{todonotes}

\usepackage{multirow}
\usepackage{makecell}

\icmltitlerunning{SVD Contextual Sparsity Predictors for Fast LLM Inference}

\begin{document}

\twocolumn[
  \icmltitle{SVD Contextual Sparsity Predictors for Fast LLM Inference}



  \icmlsetsymbol{equal}{*}

  \begin{icmlauthorlist}
    \icmlauthor{Georgii~Serbin}{huawei}
    \icmlauthor{Kirill~Koshkin}{msu}
    \icmlauthor{Zhongao~Sun}{huawei}
    \icmlauthor{Anastasiya~Bistrigova}{huawei,msu}
    \icmlauthor{C.C.~Korikov}{huawei}
  \end{icmlauthorlist}

  \icmlaffiliation{huawei}{Huawei Technologies Ltd.}
  \icmlaffiliation{msu}{Moscow State University, Russia}

  \icmlcorrespondingauthor{C.C.~Korikov}{constantine.korikov@gmail.com}

  \icmlkeywords{Machine Learning, ICML, Large Language Models, Inference Acceleration, Contextual Sparsity, Activation Sparsity}

  \vskip 0.3in
]



\printAffiliationsAndNotice{}  

\begin{abstract}
Contextual sparsity is one of the approaches used to reduce computational complexity in the inference process of large language models (LLMs).
Existing techniques for efficient LLM inference acceleration based on contextual sparsity with minimal accuracy degradation require training sparse pattern predictors.
This paper presents a framework for accelerating inference of ReGLU-based feed-forward networks (FFNs) within LLMs.
The proposed framework provides a fast, training-free method for building sparse pattern predictors using truncation-aware singular value decomposition (SVD) of the gate projection matrix, along with a threshold calibration algorithm, and inference executors supporting conditional computation on CUDA and CANN devices.
Experiments on three sparse LLMs with an average activation sparsity level of 90\% in the FFNs demonstrate up to a 1.8$\times$ reduction in end-to-end decoding time while maintaining less than 1\% degradation in benchmark scores on tasks involving complex math and code generation.
This work advances the deployment of LLMs on edge devices. 
\end{abstract}

\section{Introduction}
\label{sec:introduction}
The remarkable capabilities of LLMs have led to their widespread integration across diverse domains \cite{10.5555/3495724.3495883, 10748304}.
While on-device AI deployment offers advantages in privacy, latency, and cost compared to server-based solutions \cite{10730112}, the substantial computational requirements of LLM inference present significant challenges for edge deployment.

A key characteristic of ReGLU-based LLMs is dynamic (or contextual) sparsity -- the phenomenon where only a small subset of neurons within FFN blocks are activated for any given input.
This inherent property enables inference acceleration by reducing both computational overhead and memory bandwidth requirements during generation \cite{liu2023deja}.

A prevalent approach for leveraging contextual sparsity involves employing lightweight, learnable sparsity predictors \cite{liu2023deja}.
These predictors generate input-specific sparse activation patterns prior to FFN forward passes.
The sparse pattern identifies neuron indices that will produce positive dot products with the input vector. In ReGLU FFNs, computations for the remaining neurons can be safely skipped without affecting output correctness, as the activation function nullifies negative values.
Nevertheless, prediction errors may arise from false negative classifications, which compromise model accuracy.
Effective predictors must balance high recall rates with substantial sparsity ratios while maintaining sufficient speed to reduce end-to-end (E2E) text generation latency.

In this study, we present a simple yet effective technique leveraging low-rank singular value decomposition (SVD) to predict sparse patterns, thereby enabling significant inference acceleration in ReGLU-based LLMs.
Our methodology employs low-rank approximations of gate projection weight matrices in each layer as lightweight predictors of the FFN block sparsity mask during generation.
Unlike existing contextual sparsity prediction methods, our approach provides theoretical guarantees regarding prediction accuracy.

To enhance the accuracy of SVD-based predictors, we employ a combination of techniques:

\begin{itemize}
    \item \textbf{Calibrated bias}. Direct application of SVD approximations as predictors results in notable accuracy degradation.
	However, we observe that SVD predictors exhibit strong separation capabilities.
	Introducing neuron-specific thresholds -- implemented as bias terms -- effectively mitigates this issue.
	These biases are calibrated offline using a custom greedy algorithm that does not require gradient computation.
    \item \textbf{Data whitening technique}. Given the predictor speed constraints, the rank must be significantly smaller than the hidden dimension.
	To improve prediction accuracy under these conditions, we apply truncation-aware data whitening \cite{wang2025svdllm_v2, wang2025svdllm_v1, yuan2025asvdactivationawaresingularvalue}.
	This technique reduces Frobenius norm error by incorporating input activation statistics, requiring only a small calibration dataset.
\end{itemize}

Implementation-level optimizations further enhance inference speedup:

\begin{itemize}
    \item \textbf{Sequential computation of gate and up projections}. Gate and up projections are computed sequentially, enabling further acceleration through early elimination of false positive predictions.
	After computing gate projections, falsely activated neurons are identified and excluded from subsequent up and down projections, thereby increasing sparsity levels in the final FFN stages.
    \item \textbf{Inference executors}. 
	Our implementations support conditional computation on CUDA \cite{cuda-c-programming-guide} and CANN \cite{cann} devices.
	For CUDA, we employ custom kernels rather than Triton-based abstractions \cite{10.1145/3315508.3329973}, prioritizing enhanced low-level control.
	On Huawei Ascend hardware \cite{chen2019davinci}, comparable performance gains are achieved exclusively through high-level PyTorch APIs, demonstrating effective utilization of the platform's native software stack.
\end{itemize}

\cref{fig:framework-overview} illustrates our framework architecture.
\begin{figure}
    \centering
    \includegraphics[scale=0.84]{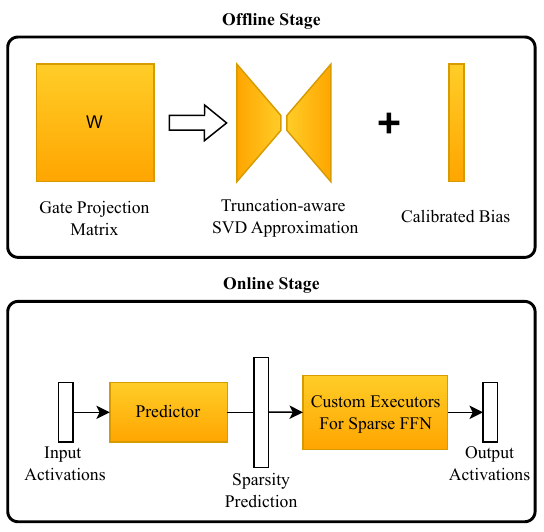}
    \caption{Overview of our framework.
	During the offline stage, sparsity predictors are constructed using low-rank SVD approximation with calibrated per-neuron thresholds.
	At inference time, SVD-based predictors precede each FFN block to exploit contextual sparsity and accelerate generation.}
    \label{fig:framework-overview}
\end{figure}
Our key contributions are summarized as follows:
\begin{enumerate}
    \item We propose training-free SVD-based contextual sparsity predictors for ReGLU-based LLM inference acceleration;
    \item We introduce calibrated bias terms that significantly improve prediction accuracy, with biases tuned on small datasets using a custom gradient-free algorithm;
    \item We develop executors supporting sparse FFN inference on CUDA and CANN devices;
    \item We achieve up to 1.8$\times$ end-to-end inference speedup with minimal accuracy degradation.
\end{enumerate}

\section{Related Work}
\label{sec:related-work}

Sparsification is one of the most effective paradigms for accelerating LLM inference, typically categorized into \textbf{static} and \textbf{dynamic sparsity} methods.

\textbf{Static sparsity} removes redundant parameters before deployment \cite{ICLR2024_316648eb}, reducing computation and memory costs but often sacrificing accuracy at high sparsity levels.
In contrast, \textbf{dynamic sparsity} -- also known as dynamic pruning or contextual sparsity -- adapts sparsity patterns at runtime based on input context, enabling flexible efficiency-accuracy trade-offs.

\textbf{Predictor-based dynamic sparsity} methods, such as Deja Vu \cite{liu2023deja} and ShadowLLM \cite{akhauri2024shadowllmpredictorbasedcontextualsparsity}, learn lightweight neural predictors to estimate layer-wise sparsity.
ShadowLLM enhances this approach by incorporating gradient information during training and utilizing a single predictor for all layers.

\textbf{Training-free dynamic sparsity} eliminates learned predictors and instead derives sparsity from activation statistics or calibrated thresholds.
GRIFFIN \cite{dong2024promptprompted} selects active ``expert’’ neurons using prefill activations, while TDA \cite{ma2024activationsmattertrainingfreemethods} introduces offline-calibrated thresholds allowing adaptive activation per layer.
Both maintain fixed sparsity patterns within a prompt.
CLADA \cite{yang2025sparsebrainsadaptivebrains} updates sparsity during generation but requires recomputation of previous states, hindering KV-cache reuse.
TEAL \cite{liu2024trainingfreeactivationsparsitylarge} exploits input sparsity, which differs from activation sparsity in that it arises from redundancy in the input representations rather than neuron activations, using calibrated thresholds.
SparseInfer \cite{sparseinfer2025} proposes a calibrated solution for ReLU-fied LLMs using sign-bit comparisons, which lacks theoretical justifications.

Dynamic sparsity can also underpin efficient on-device inference frameworks.
PowerInfer \cite{song2024powerinferfastlargelanguage} accelerates LLMs on consumer GPUs by preloading frequently activated neurons into GPU memory, minimizing CPU transfers.
It achieves up to 11.69$\times$ speedup, particularly in CPU/GPU hybrid settings, whereas our work targets homogeneous GPU inference with batch size one.

The effectiveness of dynamic sparsity depends heavily on activation functions.
ReLU activations naturally induce sparse activations, unlike smoother functions such as Swish or SiLU used in modern LLMs \cite{Zhang2024ReLU2WD}.
Recent studies show that replacing them with ReLU variants plus fine-tuning recovers substantial sparsity \cite{mirzadeh2023relustrikesbackexploiting}.
ProSparse \cite{song2024prosparse} achieved $\sim$90\% sparsity in LLaMA2-7B through progressive regularization and activation threshold shifting, while TurboSparse \cite{song2024turbosparseachievingllm} used double ReLU (dReLU) activation function to sparsify models like Qwen2-7B and Mistral-7B.

Beyond pruning, \textbf{SVD} offers complementary compression capabilities.
ASVD \cite{yuan2025asvdactivationawaresingularvalue} leverages activation-aware transformations to attain 10--30\% model compression, and SVD-LLM \cite{wang2025svdllm_v1} provides theoretical connections between discarded components and $L_2$ loss.

SparseLoRA \cite{khaki2025sparselora} integrates SVD-based estimators into parameter-efficient fine-tuning, dynamically slicing weights by input relevance.
However, sparse computations are applied only to \textit{context tokens} (the prefix tokens provided as input), while \textit{output tokens} (the target tokens used for loss computation) remain densely processed to avoid performance degradation.
Achieving accelerated and high-quality decoding of LLMs through contextual sparsity remains an open challenge not addressed by SparseLoRA.


\section{Preliminaries}

Consider a ReGLU-activated FFN block during decoding, defined as:
\begin{equation}
\text{FFN}(x) = W^\mathrm{down}\left(\text{ReLU}(W^\mathrm{gate}x) \odot \left(W^\mathrm{up}x\right)\right),
\end{equation}
where $x \in \mathbb{R}^d$ is the input activation vector, and $W^\mathrm{gate}, W^\mathrm{up} \in \mathbb{R}^{D \times d}$, and $W^\mathrm{down} \in \mathbb{R}^{d \times D}$ are the gate, up-projection, and down-projection matrices, respectively.
Here, $d$ denotes the hidden size and $D$ the intermediate size.
Let $X \in \mathbb{R}^{d \times N}$ denote the hidden states of $N$ calibration tokens, and let $r$ denote the predictor rank.
For brevity, we omit layer indices.

Our objective is to obtain a predictor $P$ that outputs a binary mask $m = P(x) \in \{0,1\}^D$ over the $D$ intermediate neurons. 
This mask identifies active neurons by selecting the corresponding rows of $W^\mathrm{gate}$ and $W^\mathrm{up}$, together with their aligned columns of $W^\mathrm{down}$, yielding sparse computation algebraically equivalent to the dense FFN due to ReLU sparsity.

\section{SVD-Based Sparsity Predictors}
\label{sec:svd-predictors}

This section introduces \emph{training-free SVD-based sparsity predictors} (SVDP), designed for efficient inference in ReGLU-based LLMs.
Our approach consists of two stages: an offline phase for predictor construction and an online phase for accelerated sparse inference. 

\subsection{Training-free Predictor Building}
The predictors are constructed by approximating the gate projection matrix $W^\mathrm{gate}$ using a low-rank SVD decomposition.
This approach offers a theoretical upper bound on local error (see~\cref{sec:math-proofs-upper-error-bound}).

We propose predictors of the affine form
\begin{equation} 
P(x) = H(A B x + b),
\end{equation}
where $H: \mathbb{R} \to \{0,1\}$ is the Heaviside step function applied element-wise.
$H(z) = 1$ if $z > 0$ and $0$ otherwise. The product $AB$ approximates $W^\mathrm{gate}$, while $b$ adjusts per-neuron decision thresholds.

\subsubsection{Data-Aware SVD-Based Factorization}
\label{subsec:svd-based-decomposition}
To improve the accuracy of the SVD approximation, we adopt a data whitening technique.
This minimizes the Frobenius norm error:
\begin{equation}
\|WX - U_r\Sigma_r V_r^\top X\|_F
\end{equation}
on calibration data $X$, by incorporating input statistics through a whitening matrix $S$:
\begin{equation}
	\begin{aligned}
	W &= (WS) S^{-1} \\
	  &= (U\Sigma V^\top) S^{-1} \approx U_r \Sigma_r V_r^\top S^{-1} = AB.
	\end{aligned}
\end{equation}	

We follow~\citet{wang2025svdllm_v1} in choosing $S$ as the Cholesky decomposition of $X X^\top$, which outperforms ASVD in terms of perplexity at low ranks, as shown in Table~2 of \citet{yuan2025asvdactivationawaresingularvalue}.

The predictor construction pipeline is described in \cref{alg:svd-data-whitening} and is performed independently for each layer.
Throughout, we fix the predictor rank $r$ across layers (see~\cref{sec:rank-size} for a discussion on rank selection). 

\begin{algorithm}
	\caption{Building SVD-predictors (Offline Phase)}
	\label{alg:svd-data-whitening}
	\begin{algorithmic}
  \STATE {\bfseries Input:}  $W^\mathrm{gate}\in\mathbb{R}^{D\times d}$, $X\in\mathbb{R}^{d\times N}$, $r\in\mathbb{N}$
	\STATE {\bfseries Output:} $A\in\mathbb{R}^{D\times r},\,B\in\mathbb{R}^{r\times d}$
  \STATE $S \in\mathbb{R}^{d\times d}\leftarrow\text{Cholesky}\left(X X^{\top}\right)$ 
  \STATE $U,\,\Sigma,\, V \leftarrow \text{SVD}\left(W^\mathrm{gate} S\right)$ 
  \STATE $A \in\mathbb{R}^{D\times r}\leftarrow U_{:,:r} \Sigma_{:r,:r}$ 
  \STATE $B\in\mathbb{R}^{r\times d} \leftarrow V_{:r,:}S^{-1}$ 
  \STATE {\bfseries return} $A,\,B$
	\end{algorithmic}
\end{algorithm}

\subsubsection{Threshold Calibration}
\label{subsec:threshold-calibration}

\begin{figure}
	\centering
	\includegraphics[scale=0.55]{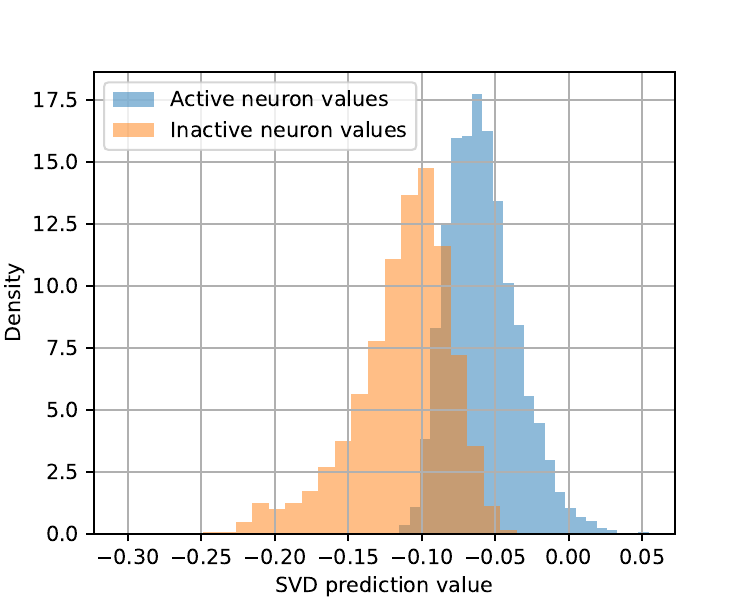}
	\caption{Distribution of SVD-based predictor outputs for a sample neuron.
	Although active and inactive states are well separated, zero thresholding causes significant false negatives for truly active neurons.}
	\label{fig:distribution-shift}
\end{figure}

SVD-based sparsity predictors typically exhibit good separation between active and inactive neurons; however, naive thresholding (e.g., at zero) leads to substantial accuracy degradation.
This effect arises from a shift in the predictor output distribution induced by truncation of singular value components, as illustrated in \cref{fig:distribution-shift}.

\begin{algorithm*}
	\caption{Greedy Bias Calibration (Offline Phase)}
	\label{alg:bias-calibration}
	\begin{algorithmic}
	\STATE {\bfseries Input:}  \\
	$\quad W^\mathrm{gate}$, $W^\mathrm{up}\in\mathbb{R}^{D\times d}$, $W^\mathrm{down}\in\mathbb{R}^{d\times D}$ -- weight matrices,\\
	$\quad \mathcal D=\{x^{(t)} \in\mathbb{R}^{d} \}_{t=1}^T$ -- calibration dataset, \\
	$\quad A, B$ -- low-rank approximation from \cref{alg:svd-data-whitening}, \\
	$\quad s\in[0;1]$ -- desired sparsity level, $\eta$ -- discrete step size.

	\STATE {\bfseries Output:}  $b\in\mathbb{R}^{D}$

	\STATE $w_i^{(t)} \leftarrow \left|\mathrm{ReLU}\left(W^\mathrm{gate}_{i, :}x^{(t)}\right) \cdot W^\mathrm{up}_{i, :}x^{(t)}\right|^2 \left\| W^\mathrm{down}_{:, i} \right\|_2^2$ 
	\hfill \textit{\small (Compute neurons' importance)}

	\STATE $s_{i}^{(t)} \leftarrow (AB x^{(t)})_i$
	\hfill \textit{\small (Compute predictor's outputs)}

	\FOR{$i=1$ {\bfseries to} $D$}
		\STATE $s_{i,(1)}\le\cdots\le s_{i,(T)}$
		\hfill \textit{\small (Sort \underline{samples} in nondecreasing order)}

		\STATE $C_i(k)=\begin{cases}
			\sum_{j=1}^k w_{i,(j)}, & k=1,\dots,T \\
			C_i(T), & T<k<T+\eta\\
			+\infty, & k=T+\eta
		\end{cases}$
		\hfill \textit{\small (Compute cumulative costs with padding)} 

		\STATE $k_i\leftarrow \min\{k: w_{i,(k)}>0\}, \quad \tau_i\leftarrow s_{i, (k_i)}$
		\hfill \textit{\small (Initialize pointers and thresholds)}
	\ENDFOR
	
	\WHILE{$\frac{1}{DT}\sum_{i=1}^D\sum_{t=1}^T\mathbf{1}[s_i^{(t)} \le \tau_i] < s$}
		\STATE $i^\star\leftarrow\arg\min_{i} \left[C_i(k + \eta)-C_i(k)\right]$
		\hfill \textit{\small (Greedy step)}
		\STATE $k_{i^\star}\leftarrow \max(k_{i^\star} + \eta,\: T), \quad \tau_{i^\star}\leftarrow s_{i, (k_{i^\star})}$
		\hfill \textit{\small (Update pointers and thresholds)}
	\ENDWHILE

	\STATE $b\leftarrow -\tau$
	\STATE \textbf{return} $b$
	
	\end{algorithmic}
\end{algorithm*}

To compensate for this distribution shift, a bias term is introduced in the second layer of the SVD-based predictor.
The central problem is to calibrate this bias so as to achieve a prescribed sparsity level while limiting the associated accuracy loss.
Simple schemes that tune thresholds to match a target sparsity, recall, or accuracy on a calibration set were empirically found to yield suboptimal trade-offs.

Instead, a fast greedy calibration algorithm, \cref{alg:bias-calibration}, is employed that explicitly accounts for the heterogeneous impact of different neurons on model accuracy.
From a computational standpoint, all neurons are equally valuable for acceleration, since pruning any neuron reduces the cost by the same amount, whereas their contributions to output accuracy differ substantially.
Consequently, some neurons require high recall, whereas others can tolerate higher false-negative rates without noticeably affecting overall performance.
Under certain conditions,  our greedy algorithm is guaranteed to provide an optimal solution (see~\cref{sec:math-proofs-greedy-threshold}).

Let $s \in (0,1]$ denote the desired sparsity ratio, controlling the speed-accuracy trade-off (see \cref{sec:rank-size}).
For each neuron $i\in[D]$, initialize $\tau_i\leftarrow\min\limits_{t:\ w_i^{(t)}>0}s_i^{(t)}$. 
Then greedily increase sparsity by advancing the neuron with smallest incremental cost $\sum_{j=1}^\eta w_{i,(k_i+j)}$ until
\[
\frac{1}{DT}\sum_{i=1}^D\sum_{t=1}^T\mathbf{1}[s_i^{(t)}\le \tau_i] \geq s.
\]
The penalty $C_i(k_i+\eta)-C_i(k_i)$ proxies FFN output perturbation across $\eta$ samples. Fixed biases $b = -\tau\in\mathbb{R}^D$ are used during inference.

The proposed predictor is lightweight and data-efficient, requiring only few thousands of tokens for reliable construction and bias estimation.
It is fast and memory-efficient, making it suitable for practical use in constrained settings.


\subsection{Runtime Execution}
\label{sec:online_phase}

We modify only the FFNs; the rest of the model remains unchanged.
During inference, the prefill stage remains unchanged.
At the decoding stage, we leverage CUDA and CANN executors for sparse FFN computation.
\Cref{fig:sparse-inference} illustrates the sparse inference pipeline, where speedup is determined by the token-dependent sparsity ratio.
\begin{figure*}[t]
    \centering
    \includegraphics[scale=0.63]{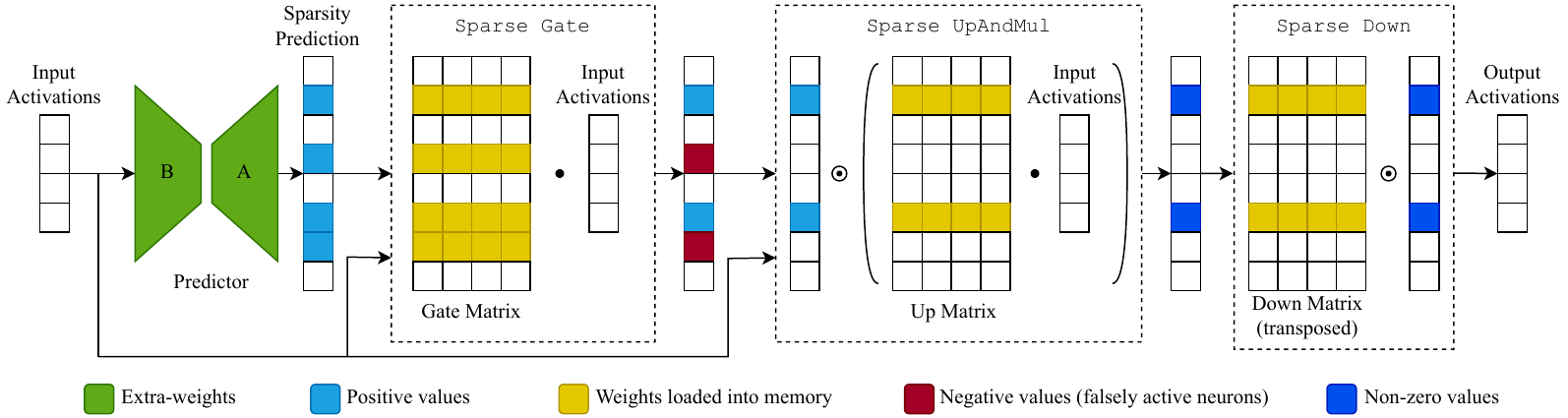}
    \caption{Sparse inference using SVD-based sparsity predictors. Neurons with positive predictor values are considered active during gate projection. Gate and up projections are executed sequentially to increase sparsity.}
    \label{fig:sparse-inference}
\end{figure*}

\paragraph{Sparsity Prediction}
Our predictor comprises two linear layers, requiring no specialized implementation.
Deja Vu employs lookahead predictors to circumvent prediction latency. 
This is enabled via asynchronous computation, where predictor execution time is overlapped with data transfer from global memory 
or inter-process communication under tensor parallelism. 
However, such pipelining may be infeasible on certain hardware platforms (see \cref{sec:inference-executors-details}). 
Given our focus on local inference for edge deployment scenarios, similar to PowerInfer engine, our predictors are inserted at the beginning of each FFN and sequentially generate sparsity patterns for their respective gate projections.

\paragraph{Gate/up Execution Order}
A prevalent optimization for efficient deployment of LLMs with gated FFNs involves parallelized execution of the gate and up projections. 
While effective in dense computation scenarios, this approach presents non-trivial challenges for local inference systems leveraging sparse activations.
We delineate two distinct computational methodologies to address this:
\begin{itemize}
	\item \textit{Parallel pipeline} e.g., PowerInfer. 
	A predictor first generates a sparsity mask, which is concurrently applied to independently compute both gate and up projections. 
	This maintains identical sparsity levels across both projections during execution.
	\item \textit{Sequential pipeline} e.g., SparseInfer and our approach. 
	The gate and up projections are computed sequentially, eliminating false positives: 
	neurons predicted as active are revalidated after gate projection computation, further increasing sparsity in subsequent projections.
\end{itemize}

\paragraph{Algorithmic Complexity}
In our case, realized sparsity equals or exceeds the predicted sparsity $s$ by filtering erroneous predictions.
Thanks to sequential pipeline, we can use more lightweight predictors that provide relatively low sparsity at the gate projection stage without compromising the FFN's performance.

A dense FFN requires $3dD$ multiplications (excluding element-wise product), whereas our sparse approach reduces this to:
\begin{equation}
    \underbrace{r(d + D)}_{\text{predictor}} + \underbrace{d(1-s)D}_\mathrm{gate} + \underbrace{2d(1-s')D}_{\text{up/down}}
\end{equation}
where $s$ is the predicted sparsity and $s'$ the realized sparsity.
For example, in ProSparse-LLaMA2-7B \cite{song2024prosparse} ($d\!=\!4096$, $D\!=\!11008$, $s\!=\!0.5$, $s'\!=\!0.9$, $r\!=\!256$), operations decrease by 3.8$\times$.
Note that hardware-level dynamic sparsity challenges prevent this from directly translating to wall-clock speedup.

\paragraph{Hardware platforms}
CUDA and CANN devices are supported for sparse FFN inference. See implementation details in \cref{sec:inference-executors-details}. 

\section{Experimental Results}
\label{sec:experimental-results}
\subsection{Experimental Setup}
\paragraph{Hardware} A single consumer-grade NVIDIA RTX 3090 GPU
\footnote{All GPU-based experiments were conducted exclusively by authors affiliated with Moscow State University (Kirill Koshkin) without involvement of Huawei-affiliated researchers. Huawei-affiliated co-authors contributed solely to theoretical analysis, algorithm design and CANN performance evaluation.
} at the MSU Laboratory of Computer Engineering is used for both quality and performance evaluation. 
The inference is performed entirely on GPU.

For evaluation on NPU, a single Ascend 310P3 \cite{ascend-310p3} device is used.

\paragraph{Software} We use the vLLM framework \cite{kwon2023efficient}, preferring it to the Transformers framework \cite{wolf-etal-2020-transformers} for the experiments, as it better reflects real-world deployment scenarios.
Notably, we do not utilize any server-specific features of vLLM, such as tensor parallelism.
Inference is performed in half precision, while predictor construction in the offline stage uses double precision.

\paragraph{Models}
We evaluate our solution across three open-source sparse 7B language models:
\textbf{ProSparse-LLaMA2-7B}~\cite{song2024prosparse} and two instruction-tuned models -- \textbf{SparseQwen2-7B} and \textbf{TurboSparse-Mistral-Instruct}~\cite{song2024turbosparseachievingllm}.

The instruction-tuned models incorporate double ReLU (dReLU) activation in their FFN layers:
\begin{equation}
\text{FFN}(x)\!=\!W^\mathrm{down}\!\left(\text{ReLU}\!\left(W^\mathrm{gate}x\right)\!\odot\!\text{ReLU}\!\left(W^\mathrm{up}x\right)\right)
\end{equation}

\paragraph{Quality Evaluation}
We evaluate model performance using a diverse set of open-source benchmarks spanning reasoning, coding, knowledge, and multilingual understanding: 
\textbf{GSM8K}~\cite{cobbe2021trainingverifierssolvemath} (math reasoning), 
\textbf{HumanEval}~\cite{chen2021evaluatinglargelanguagemodels} (code generation), 
\textbf{ARC-E/ARC-C}~\cite{clark2018thinksolvedquestionanswering} (scientific and commonsense reasoning), 
\textbf{TriviaQA}~\cite{joshi2017triviaqalargescaledistantly} (open-domain QA), 
\textbf{MBPP}~\cite{austin2021programsynthesislargelanguage} (small Python tasks), 
\textbf{BBH}~\cite{suzgun2022challengingbigbenchtaskschainofthought} (complex reasoning), 
and \textbf{CMMLU}~\cite{li2024cmmlumeasuringmassivemultitask} (Chinese multi-task understanding).

We adopt the \textbf{UltraEval} framework~\cite{he2024ultraeval}, aligning with prior ProSparse evaluations, as the model shows compatibility issues with \textbf{lm-eval-harness}~\cite{eval-harness}. All benchmarks use \textit{free-form generation} rather than perplexity scoring to better reflect real-world LLM usage.

\paragraph{Performance Evaluation}
Model performance is measured using the \textbf{PyTorch profiler}~\cite{Ansel_PyTorch_2_Faster_2024}.
For each domain, results are averaged across multiple samples.
E2E latency includes the prefill stage and 200-token generation.

\subsection{Performance Under Different Sparsity Ratios}
The desired sparsity hyperparameter $s$ from \cref{alg:bias-calibration} influences the calibrated thresholds and can be used to trade off accuracy and acceleration. As \cref{tab:7b-sparsity} shows, lower thresholds lead to lower sparsity and lower acceleration. In contrast, higher thresholds provide more speedup but may degrade quality.
Long generation tasks are the most sensitive, while for short answer tasks the sparsity can be significantly increased. With our approach we can achieve up to $1.8\times$ E2E generation speedup while maintaining less than 1\% degradation in benchmark scores on tasks involving complex math and code generation. See \cref{sec:detailed-acc-results} for detailed evaluation results.

\setlength{\tabcolsep}{3.5pt}
\begin{table}
\caption{Average accuracy when varying hyperparameter $s$ for different models. $s$ implicitly sets sparsity, which directly translates to end-to-end speedup.}
\label{tab:7b-sparsity}
\centering
\scriptsize
\begin{tabular}{lcclc}
\toprule
Model & Method & $s$ & Accuracy, \% & Speedup \\
\midrule
\multirow{5}{*}{ProSparse-LLaMA2-7B} 
& Dense & $-$ & 38.92 &   1.00$\times$ \\
\cmidrule{2-5}
& \multirow{3}{*}{SVDP}
& 0.4 & 38.66\,(-0.26) &   1.61$\times$ \\  
& & 0.5 & 38.78\,(-0.14) & 1.64$\times$ \\ 
& & 0.6 & 38.77\,(-0.15) & 1.67$\times$ \\ 
& & 0.7 & 38.57\,(-0.35) & 1.71$\times$ \\ 
\midrule
\multirow{5}{*}{TurboSparse-Mistral-Instruct} 
& Dense & $-$ & 54.30 & 1.00$\times$ \\
\cmidrule{2-5}
& \multirow{4}{*}{SVDP} 
& 0.4 & 53.72\,(-0.58) &   1.67$\times$ \\ 
& & 0.5 & 53.56\,(-0.74) & 1.73$\times$ \\ 
& & 0.6 & 53.18\,(-1.12) & 1.79$\times$ \\ 
& & 0.7 & 53.44\,(-0.86) & 1.84$\times$ \\ 
\midrule
\multirow{5}{*}{SparseQwen2-7B} 
& Dense & $-$ & 65.84 & 1.00$\times$ \\
\cmidrule{2-5}
& \multirow{4}{*}{SVDP} 
& 0.4 & 65.41\,(-0.43) &   1.78$\times$ \\ 
& & 0.5 & 65.20\,(-0.64) & 1.81$\times$ \\ 
& & 0.6 & 65.13\,(-0.71) & 1.85$\times$ \\ 
& & 0.7 & 64.49\,(-1.35) & 1.86$\times$ \\ 
\bottomrule
\end{tabular}
\end{table}

{
\setlength{\tabcolsep}{2pt}
\begin{table*}[t]
\centering
\caption{Comparison with other methods on ProSparse-LLaMA2-7B, evaluated in our framework ($r=1024$ parallel, $r=256$ sequential).}
\label{tab:other-methods}
\footnotesize
\begin{tabular}{p{2.5cm} c *{8}{c} cc}
\toprule
\multirow{2}{*}{\makecell[c]{Method}} & \multirow{2}{*}{\makecell[c]{Predicted \\sparsity, \%}} 
& TriviaQA & ARC-E & ARC-C & HumanEval & MBPP & GSM8K & BBH & CMMLU & Average & E2E \\
& & acc, \% &acc, \% &acc, \% &pass@1, \% &pass@1, \% &acc, \% &acc, \% &acc, \% & Score & Speedup \\
\midrule
Dense baseline & $-$ & 62.94 & 75.42 & 53.58 & 17.68 & 22.59 & 11.45 & 35.85 & 31.84 & 38.92 & 1.00$\times$ \\
\midrule
\multicolumn{12}{l}{\textit{Trained predictors}} \\
	\hspace{1em}Deja Vu & 20 & 59.47 & 74.87 & 53.41 & 3.66 & 2.98 & 3.03 & 34.49 & 30.97 & 32.86 & 1.38$\times$ \\
	\hspace{1em}PowerInfer & 80 & 61.10 & 74.54 & 52.99 & 15.85 & 21.15 & 10.69 & 35.29 & 31.80 & 37.93 & 1.63$\times$ \\
\midrule
\multicolumn{12}{l}{\textit{Training-free methods}} \\
	\hspace{1em}GRIFFIN & 20 & 59.77 & 74.75 & 53.33 & 15.85 & 22.18 & 8.95 & 35.49 & 32.01 & 37.79 & 1.14$\times$ \\
	\hspace{1em}GRIFFIN & 50 & 35.74 & 72.77 & 51.96 & 12.20 & 18.17 & 4.47 & 35.32 & 31.88 & 32.81 & 1.46$\times$ \\
\midrule
\rowcolor{red!10}\multicolumn{12}{l}{\textit{Ours}} \\
	\rowcolor{red!10}\hspace{1em}SVDP 		 & 75 & 62.66 &	75.38 &	53.58&	17.68&	22.18&	10.54&	35.86&	31.85&	38.72	& 1.61$\times$ \\
	\rowcolor{red!10}\hspace{1em}SVDP~(seq.) & 50 & 62.74 & 75.34 & 53.67&  17.68&  22.07&	11.22&	35.58&	31.94&	38.78	& 1.64$\times$ \\
\bottomrule
\end{tabular}
\end{table*}
}

\begin{figure}
	\centering
	\includegraphics[scale=0.50]{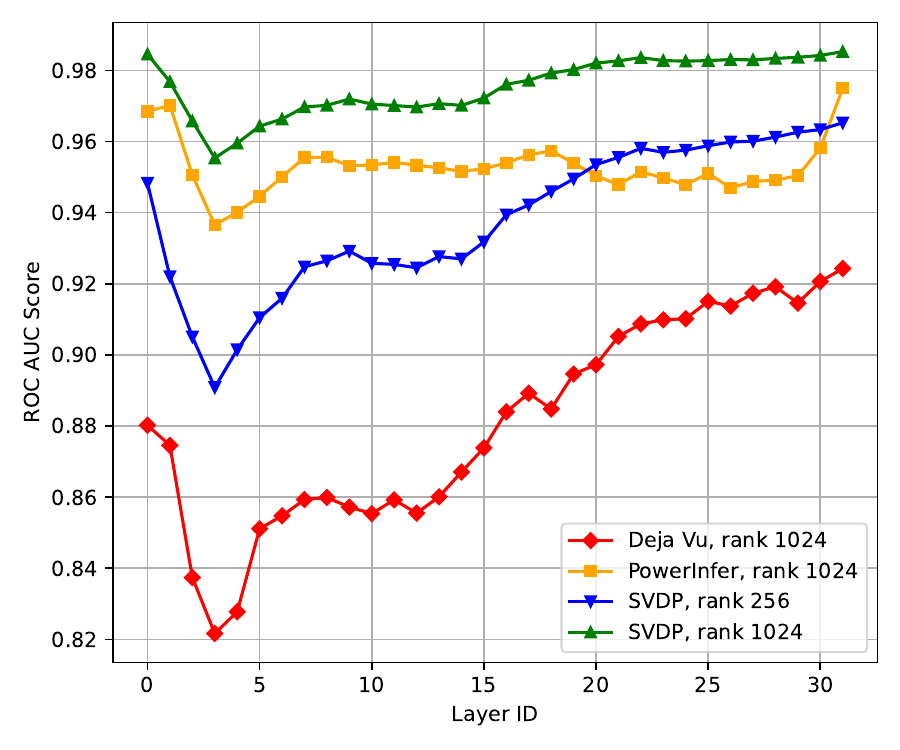}
	\caption{Layer-wise ROC AUC score of different predictors averaged across several sequence samples for ProSparse-LLaMA2-7B.
	AUROC score of 1 indicates perfect separation and AUROC score of 0.5 corresponds to random guessing.}
	\label{fig:predictors-roc-auc-score}
\end{figure}

\subsection{Comparison With Other Methods}
We compare SVDP against training-free sparsity methods (GRIFFIN \cite{dong2024promptprompted}) and learnable predictors (PowerInfer \cite{song2024powerinferfastlargelanguage}, Deja Vu \cite{liu2023deja}).
GRIFFIN relies on prompt-based statistical sparsity patterns, while PowerInfer and Deja Vu use trained predictors. 

For Deja Vu, we trained predictors following their official repository instructions. 
For PowerInfer, we used their recommended pretrained predictors. 

\paragraph{Benchmarks}
\cref{tab:other-methods} shows that our SVDP matches PowerInfer's accuracy and sparsity while requiring less data and compute for construction.
GRIFFIN struggles with stable performance at high sparsity ratios, particularly for long generations.
The sequential pipeline implementation reduces SVDP size by 4$\times$ without compromising either accuracy or inference speed.

\paragraph{ROC-AUC Analysis}
We also evaluate predictor separation ability via layer-wise ROC-AUC scores on held-out test sequences (\cref{fig:predictors-roc-auc-score}). 
SVDP outperforms learnable baselines on unseen data.
Trained predictors may suffer from out-of-distribution sensitivity due to their data-dependent nature, while SVD predictors directly leverage the LLM's internal weight structure.

\subsection{Ablation Study}
\begin{table*}[t]
    \centering
	\caption{SVD sparsity estimator ablation study. 
	$^\dagger$Averaged across all benchmarks (see Appendix~\ref{sec:detailed-acc-results}).}		
    \label{tab:components-breakdown-simple}
    \footnotesize
	\begin{tabular}{p{2.5cm}|ccc|ccc|ccc}
		\toprule
		\multirow{2}{*}{\makecell[c]{Configuration}}
		& \multicolumn{3}{c|}{ProSparseLLAMA2-7B}
		& \multicolumn{3}{c|}{TurboSparse-Mistral-Instruct}
		& \multicolumn{3}{c}{SparseQwen2-7B} \\
		& GSM8K & HumanEval & $\text{Average}^\dagger$ & GSM8K & HumanEval & $\text{Average}^\dagger$ & GSM8K & HumanEval & $\text{Average}^\dagger$ \\
		\midrule
		Dense baseline & 11.45 & 17.68 & 38.92 & 61.71 & 34.15 & 54.30 & 77.1 & 64.02 & 65.84 \\
		\midrule
		\multicolumn{10}{l}{\textit{Parallel pipeline}} \\
		\hspace{1em}Naive SVD & 6.60 & 10.98 & 34.08 & 60.20 & 24.39 & 51.52 & 74.00 & 53.66 & 62.05 \\
		\hspace{1em}+ Data whitening & 10.16 & 16.46 & 37.10 & 60.88 & 37.20 & 53.02 & 75.89 & 61.59 & 63.99 \\
		\hspace{1em}+ Bias calibration & 10.54 & 17.68 & 38.72 & 62.24 & 36.59 & 54.34 & 76.65 & 62.20 & 65.37 \\
		\midrule
		\multicolumn{10}{l}{\textit{Sequential pipeline}} \\
		\hspace{1em}Naive SVD & 1.90 & 0.61 & 9.69 & 9.17 & 1.83 & 30.87 & 63.00 & 12.20 & 51.60 \\
		\hspace{1em}+ Data whitening & 6.90 & 8.54 & 31.00 & 48.60 & 29.27 & 46.49 & 72.93 & 57.32 & 60.80 \\
		\hspace{1em}+ Bias calibration & 11.22 & 17.68 & 38.78 & 61.41 & 33.54 & 53.56 & 76.80 & 62.80 & 65.22 \\
		\bottomrule
		\end{tabular}
\end{table*}

\begin{figure}
	\centering
	\includegraphics[scale=0.50]{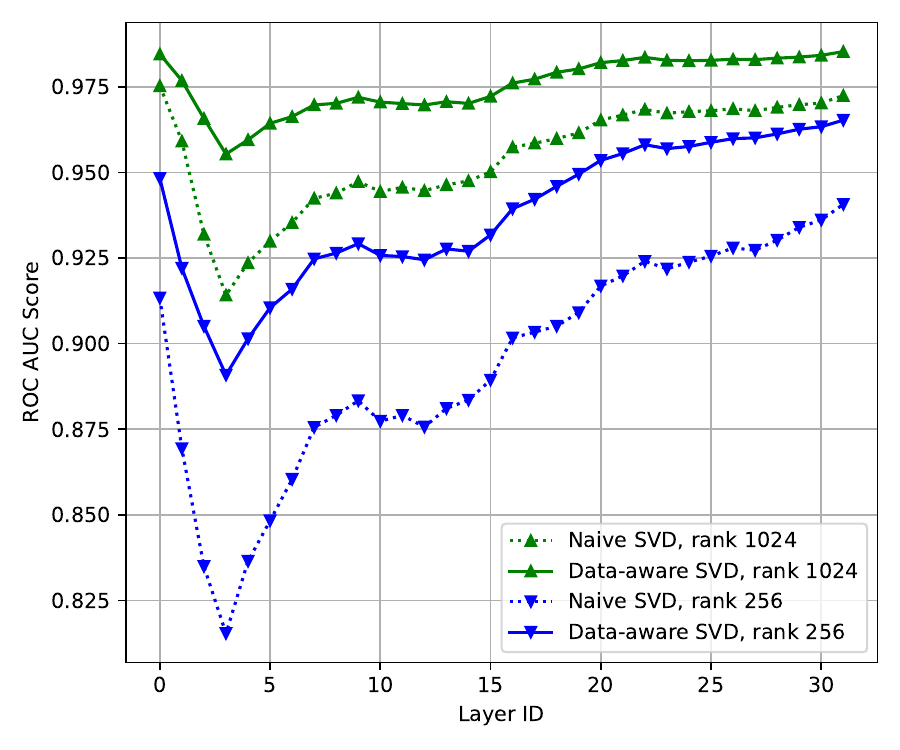}
	\caption{Layer-wise ROC-AUC: data-aware SVD vs.\ Naive SVD for ProSparse-LLaMA2-7B.}
	\label{fig:rocauc-ablation}
\end{figure}

\cref{tab:components-breakdown-simple} shows ablation of SVD sparsity estimators, starting from SparseLoRA's baseline~\cite{khaki2025sparselora} (referred to as Naive SVD):
\begin{itemize}
    \item \textit{Data whitening:} Incorporate input data statistics (Sec.~\ref{subsec:svd-based-decomposition}).
    \item \textit{Bias calibration:} Add neuron-specific bias terms (Sec.~\ref{subsec:threshold-calibration}).
\end{itemize}
Bias calibration significantly improves accuracy by accounting for distribution shift after low-rank approximation. 

Data whitening substantially enhances ROC-AUC separation vs.\ Naive SVD on held-out test sequences (\cref{fig:rocauc-ablation}), independent of bias calibration.

\subsection{Inference Executors} 
\cref{tab:npu-res} presents FFN performance of our CUDA and CANN executors across varying sparsity ratios. Here, we use Huawei's Ascend 310P3 AI accelerator.
Notably, we achieve comparable performance gains on the Ascend hardware using only high-level PyTorch APIs, eliminating the need for manual low-level instruction optimization. 
This result facilitates deeper exploration of contextual sparsity optimizations within the DaVinci architecture. 
It is worth noting that when the sparsity value approaches 90\%, the L1 \cite{chen2019davinci} cache hit rate reaches almost 100\%, resulting in a significant speedup increase.
{
\begin{table}
	\caption{FFN performance of our CUDA and CANN executors at different sparsity ratios. The speedups are compared to the corresponding dense implementations. Predictors latency is not included. For this experiment, sparsity ratio is fixed across all linear layers.}
	\centering
	{
		\footnotesize
		\begin{tabular}{lccc}
			\toprule
		
			\multirow{2}{*}{\makecell[c]{LLM architecture}}	& \multirow{2}{*}{\makecell[c]{Sparsity, \%}}  & \multicolumn{2}{c}{FFN Speedup}\\
			\cmidrule(r){3-4}
											&  			& CUDA & CANN\\ \midrule
			\multirow{4}{*}{ProSparseLLAMA2-7B	} & 20 &1.30$\times$   & 0.76$\times$ \\ 
			 								& 50  &  1.90$\times$ & 1.23$\times$ \\ 
			 								& 80  &  3.34$\times$ & 3.38$\times$ \\ 
			 								& 95  &  4.67$\times$ & 30.67$\times$ \\ \midrule
			\multirow{4}{*}{TurboSparse-Mistral-Instruct } & 20 &1.29$\times$   & 0.76$\times$ \\ 
			 								& 50  &  1.90$\times$ & 1.18$\times$ \\ 
			 								& 80  &  3.44$\times$ & 3.10$\times$ \\ 
			 								& 95  &  4.92$\times$ & 22.85$\times$ \\ \midrule
			\multirow{4}{*}{SparseQwen2-7B} & 20 &   1.33$\times$&  0.81$\times$ \\ 
			 								& 50  &  1.95$\times$ & 1.32$\times$ \\ 
			 								& 80  &  3.47$\times$ & 3.41$\times$ \\ 
			 								& 95  &  4.91$\times$ & 28.46$\times$ \\\bottomrule
			\end{tabular}
	}
	\label{tab:npu-res}
\end{table}
}

\section{Limitations}
\begin{itemize}
	\item Our method targets sparse models where non-linear activation functions naturally produce significant zero-valued outputs.
Extension to non-ReGLU-based architectures remains an important direction for future research. See \cref{sec:non-relu-based-llms} for detailed discussion.
	\item Batched processing inherently reduces sparsity levels, limiting achievable speedups and precluding the use of sparsity predictors during the prefill phase.
\end{itemize}

\section*{Conclusion}
This paper introduces a training-free framework for accelerating the inference of ReGLU-based large language models through contextual sparsity exploitation.
The proposed solution comprises two stages: (1) an offline calibration phase where sparsity predictors are constructed using data-aware truncated singular value decomposition of gate projection matrices with calibrated bias parameters, and (2) an online inference phase enabling efficient sparse LLM execution on CUDA and CANN devices.
Evaluation across three sparse language models demonstrates up to 1.8$\times$ end-to-end decoding acceleration with minimal accuracy degradation across diverse benchmarks, including code and mathematical reasoning tasks.

\section*{Acknowledgements}
We thank the MSU Laboratory of Computer Engineering for conducting all GPU-based experiments.
Huawei-affiliated authors declare no access to or involvement in GPU-based experiments due to U.S. export control regulations. 
No Huawei-affiliated researchers participated in GPU-dependent aspects of this work.

\section*{Impact Statement}

This paper presents an approach for accelerated inference of LLMs. While there are many potential societal consequences of our work, we do not believe any require specific highlighting here.

\bibliography{paper}
\bibliographystyle{icml2026}

\newpage
\appendix
\onecolumn

\section{Theoretical Analysis}
\label{sec:math-proofs}

\subsection{Predictor error bounds}
\label{sec:math-proofs-upper-error-bound}

Let $m,d > 0$ and $0 < r \le \min(m,d)$. Consider a weight matrix $W \in \R^{m\times d}$ 
and its rank-$r$ approximation $W_r \in \R^{m\times d}$ (e.g., truncated SVD), 
with bias vector $b \in \R^m$. For any input $x \in \R^d$, $x \neq 0$, we have:

\begin{align}
    A(x) &:= \relu(Wx), \label{eq:true_act} \\
    g_b(x) &:= \one[W_r x + b > 0], \label{eq:predictor} \\
    B_b(x) &:= A(x) \odot g_b(x). \label{eq:gated}
\end{align}

Throughout, we use the following notation:
\begin{align*}
    \delta_i &\in \R^m, \quad i\in[m], \quad \text{denotes the $i$-th standard basis vector $(0,\ldots,\underbrace{1}_i,\ldots,0)^\top$}, \\
    t_+ &:= \max(t,0), \quad \text{the ReLU function (i.e., $t_+=0$ if $t<0$ and $t_+=t$ if $t\ge 0$)}.
\end{align*}

We aim to derive deterministic bounds on the relative gating error
\[
\mathcal{E}_b(x) := \frac{\norm{A(x) - B_b(x)}_2}{\norm{x}_2}
\]
in terms of the spectral error of $W_r$ and the bias $b$.

\paragraph{Exact error representation and monotonicity}
\begin{definition}[Residual operator]
Define the residual matrix and residual vector
\[
\Delta W := W - W_r,
\qquad
\res(x) := \Delta W x .
\]
\end{definition}

\begin{proposition}[Exact representation of the gating error]\label{prop:exact_error}
For every $x \in \R^d$,
\begin{equation}\label{eq:exact_error}
A(x)-B_b(x)
= \relu(Wx) \odot \one[W_r x + b \le 0],
\end{equation}
and therefore
\begin{equation}\label{eq:exact_error_norm}
\norm{A(x)-B_b(x)}_2^2
= \sum_{i=1}^m \pos{(Wx)_i}^2 \, \one[(W_r x + b)_i \le 0].
\end{equation}
\end{proposition}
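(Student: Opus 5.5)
The plan is to argue coordinatewise, since every object in \cref{prop:exact_error} acts entrywise. By definitions \eqref{eq:true_act}--\eqref{eq:gated}, $B_b(x) = A(x)\odot g_b(x)$. Hence
\[
A(x) - B_b(x) = A(x)\odot\bigl(\one - g_b(x)\bigr),
\]
where $\one$ is the all-ones vector in $\R^m$. The first step is to record that for any real $t$ the indicators satisfy $1 - \one[t>0] = \one[t\le 0]$. Applying this entrywise to $t=(W_r x + b)_i$ gives $\one - g_b(x) = \one[W_r x + b \le 0]$. Substituting $A(x)=\relu(Wx)$ then yields \eqref{eq:exact_error} immediately.

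For the norm identity \eqref{eq:exact_error_norm}, I would expand $\norm{v}_2^2 = \sum_{i=1}^m v_i^2$ for $v = A(x)-B_b(x)$. The $i$-th entry is $v_i = \pos{(Wx)_i}\,\one[(W_r x+b)_i\le 0]$. Squaring it and using that an indicator is idempotent, $\one[\cdot]^2 = \one[\cdot]$, gives exactly the stated summand. Summing over $i\in[m]$ completes the identity.

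I do not expect any real obstacle here. The only point needing care is consistency of the strict versus non-strict inequality: the predictor uses the Heaviside convention $H(0)=0$, i.e.\ $\one[\cdot>0]$, so its complement must be $\le 0$, as written. Note also that the identity holds for all $x$, including $x=0$; the restriction $x\neq 0$ matters only later, when dividing by $\norm{x}_2$ in $\mathcal{E}_b(x)$.
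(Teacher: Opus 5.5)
Your proof is correct and follows the same route as the paper, which simply says the identity is immediate from the componentwise definition $B_b(x)=A(x)\odot g_b(x)$. You have made explicit the two facts that this relies on: the complement $1-\one[t>0]=\one[t\le 0]$, and the idempotence of the indicator in the squared-norm expansion.
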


\begin{proof}
The claim follows immediately from the componentwise definition of $B_b(x)$.
\end{proof}

\begin{proposition}[Monotonicity w.r.t.\ biases]\label{prop:mono_b}
Let $b,b'\in\R^m$ and suppose $b'\le b$ componentwise. Then for all $x\in\R^d$,
\[
\norm{A(x)-B_{b'}(x)}_2
\;\ge\;
\norm{A(x)-B_{b}(x)}_2.
\]
\end{proposition}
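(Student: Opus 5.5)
The plan is to use the exact representation from \cref{prop:exact_error} and compare the two squared error norms term by term. By \eqref{eq:exact_error_norm},
\[
\norm{A(x)-B_{b}(x)}_2^2=\sum_{i=1}^m \pos{(Wx)_i}^2\,\one[(W_r x + b)_i \le 0],
\]
and the same expression holds with $b'$ in place of $b$. The weights $\pos{(Wx)_i}^2\ge 0$ do not depend on the bias. It therefore suffices to show, for each coordinate $i$, that the indicator for $b'$ dominates the indicator for $b$.

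The key step is this coordinatewise implication. Suppose $(W_r x + b)_i \le 0$. Since $b'_i\le b_i$, we get $(W_r x + b')_i = (W_r x)_i + b'_i \le (W_r x)_i + b_i \le 0$. Hence
\[
\one[(W_r x + b)_i \le 0]\le \one[(W_r x + b')_i \le 0]
\]
for every $i$. Multiplying by the nonnegative weights $\pos{(Wx)_i}^2$ and summing over $i$ gives
\[
\norm{A(x)-B_{b'}(x)}_2^2\ge \norm{A(x)-B_{b}(x)}_2^2 .
\]
Taking nonnegative square roots, which is a monotone operation, yields the claim. The inequality holds for all $x$, including $x=0$, since the argument never divides by $\norm{x}_2$.

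There is no real obstacle here. The only point needing care is the direction of the indicator inequality. Lowering the bias can only enlarge the set of neurons predicted inactive, namely the false negatives, and the error is a nonnegatively weighted count over exactly that set.
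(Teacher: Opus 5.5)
Your proof is correct and follows the paper's own argument. Both use the exact representation from \cref{prop:exact_error}, the componentwise domination $\one[W_r x + b \le 0]\le \one[W_r x + b' \le 0]$ implied by $b'\le b$, and the nonnegativity of $\relu(Wx)$; you simply write out the weighted-sum and square-root steps that the paper leaves implicit.
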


\begin{proof}
If $b'\le b$, then $\one[W_r x + b \le 0]\ \le\ \one[W_r x + b' \le 0]$ componentwise.
Using the identity \eqref{eq:exact_error} and the fact that $\relu(Wx)\ge 0$ componentwise, the $\ell_2$ norm can only increase when we replace $b$ by the smaller bias $b'$, which activates the mask on a superset of coordinates.
\end{proof}	

\begin{remark}
Using thresholds $\tau=-b$ clarifies the trade-off: increasing $\tau$ increases sparsity
(more gates are off) but never decreases the error; equivalently, increasing $b$
reduces sparsity and never increases the error.
\end{remark}

\paragraph{A deterministic bound via the shifted residual}

\begin{lemma}[Residual-based upper bound]\label{lem:shifted_residual}
For every $x \in \R^d$ and $b \in \R^m$,
\begin{equation}\label{eq:shift_tail}
\norm{A(x)-B_b(x)}_2
\;\le\;
\norm{\vpos{\Delta W x - b}}_2
\;\le\;
\norm{\Delta W x - b}_2.
\end{equation}
\end{lemma}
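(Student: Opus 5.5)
The plan is to work coordinatewise from the exact representation in \cref{prop:exact_error}. That identity gives
\[
\norm{A(x)-B_b(x)}_2^2=\sum_{i=1}^m \pos{(Wx)_i}^2\,\one[(W_r x+b)_i\le 0].
\]
So it suffices to prove the scalar inequality
\[
\pos{(Wx)_i}\,\one[(W_r x+b)_i\le 0]\;\le\;\pos{(\Delta W x-b)_i}\qquad\text{for every } i\in[m].
\]
Once this holds, squaring and summing over $i$ yields the first inequality in \eqref{eq:shift_tail}.

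For the scalar inequality, fix $i$ and write $w=(Wx)_i$, $v=(W_r x)_i$, and $\beta=b_i$. Then $w=v+\res(x)_i$, so $(\Delta W x-b)_i=w-v-\beta$. We split into cases.
\begin{itemize}
\item If the indicator is $0$, the left side vanishes and the right side is nonnegative.
\item If the indicator is $1$ and $w\le 0$, the left side is again $0$.
\item If the indicator is $1$ and $w>0$, then $v+\beta\le 0$. Hence $w-v-\beta\ge w>0$, and so $\pos{w-v-\beta}=w-v-\beta\ge w=\pos{w}$.
\end{itemize}
This covers all cases.

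The second inequality follows from the componentwise bound $0\le \pos{t}\le |t|$. Squaring it and summing over coordinates gives $\norm{\vpos{\Delta W x-b}}_2\le\norm{\Delta W x-b}_2$.

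There is no real obstacle here; the argument is elementary. The only step needing care is the third case, where the gating condition $v+\beta\le 0$ must be combined with the decomposition $w=v+\res(x)_i$ so that the positive part is compared to a quantity that is provably at least $w$.
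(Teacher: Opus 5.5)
Your proof is correct and is essentially the paper's argument. Both reduce the claim, on each gated coordinate where $(W_r x)_i \le -b_i$, to the scalar bound $\pos{(Wx)_i} \le \pos{(\Delta W x - b)_i}$. The paper gets this from monotonicity of $t \mapsto t_+$, where you use a short case split. The second inequality is handled identically, via $\pos{t} \le |t|$.
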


\begin{proof}
Let $z = Wx$, $z_r = W_r x$, and $\res := z - z_r = \Delta W x$.
Define the gated index set
\[
I := \{ i \in [m] : (z_r + b)_i \le 0 \}.
\]
By definition,
\[
\norm{A(x)-B_b(x)}_2^2 = \sum_{i\in I} \pos{z_i}^2.
\]
For $i\in I$ we have $z_{r,i} \le -b_i$, hence
$z_i = z_{r,i} + \res_i \le \res_i - b_i$.
By monotonicity of $t \mapsto \pos{t}$,
$\pos{z_i} \le \pos{\res_i - b_i}$.
Therefore
\[
\sum_{i\in I} \pos{z_i}^2
\le
\sum_{i=1}^m \pos{\res_i - b_i}^2
=
\norm{\vpos{\res - b}}_2^2,
\]
which gives the first inequality.
The second inequality follows from $\norm{\vpos{u}}_2 \le \norm{u}_2$.
\end{proof}


\paragraph{Worst-case bounds}

Due to the previous result, the analysis of the nonlinear gating error reduces to a deterministic quantity depending only on the residual $\Delta W x$ and the bias $b$.
We therefore can study the worst-case value
\[
\sup_{\norm{e}_2 \le R} \norm{\vpos{e-b}}_2,
\]
which characterizes the maximal gating error at residual energy $R$.

\subparagraph{Uniform bias}

\begin{theorem}[Worst-case bound for uniform bias]\label{thm:const_bias}
Let $b=\beta\mathbf 1$ with $\beta\in\R$. For all $R\ge 0$,
\[
\sup_{\norm{e}_2 \le R} \norm{\vpos{e-\beta\mathbf{1}}}_2
=
\begin{cases}
\pos{R-\beta}, & \beta \ge 0,\\[3pt]
R - \beta \sqrt{m}, & \beta < 0.
\end{cases}
\]
\end{theorem}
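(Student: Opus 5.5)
For both signs of $\beta$ I will prove a matching upper bound and then exhibit an explicit residual $e$ with $\norm{e}_2\le R$ that attains it. The analysis is purely about the map $e\mapsto \vpos{e-\beta\one}$ on the ball of radius $R$, and it splits on the sign of $\beta$.

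\emph{Case $\beta<0$.} Write $c:=-\beta>0$, so $e-\beta\one=e+c\one$. Since $\norm{\vpos{u}}_2\le\norm{u}_2$ (as already used in \cref{lem:shifted_residual}), the triangle inequality gives
\[
\norm{\vpos{e+c\one}}_2\le \norm{e}_2+c\norm{\one}_2\le R+c\sqrt m = R-\beta\sqrt m .
\]
For attainment I take $e=R\,\one/\sqrt m$. Then $e+c\one$ has all entries positive, so the positive part does nothing. Moreover $e$ is parallel to $\one$, so the triangle inequality is an equality and the bound is reached.

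\emph{Case $\beta\ge0$.} For the lower bound I take $e=R\delta_1$. This gives $\vpos{e-\beta\one}=\pos{R-\beta}\delta_1$, whose norm is $\pos{R-\beta}$. For the upper bound, fix $e$ and let $S:=\{i:e_i>\beta\}$. If $S=\emptyset$, the quantity is $0$ and there is nothing to prove. Otherwise let $u:=e_S$, which has entries $u_i>\beta\ge0$. Then $\norm{\vpos{e-\beta\one}}_2=\norm{u-\beta\one_S}_2$, and $\norm{u}_2\ge u_i>\beta$ for every $i\in S$.

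The key step, which I expect to be the only real obstacle, is a coordinatewise scaling estimate. A naive expansion of $\norm{u-\beta\one_S}_2^2$ only yields $\norm{u}_2^2-\beta\norm{u}_2$, which is too weak. Instead, for each $i\in S$,
\[
0< u_i-\beta\le u_i\Bigl(1-\tfrac{\beta}{\norm{u}_2}\Bigr),
\]
which is equivalent to $\beta u_i/\norm{u}_2\le\beta$ and holds because $u_i\le\norm{u}_2$. Both sides are nonnegative, so I can square and sum over $S$:
\[
\norm{u-\beta\one_S}_2\le\Bigl(1-\tfrac{\beta}{\norm{u}_2}\Bigr)\norm{u}_2=\norm{u}_2-\beta\le R-\beta=\pos{R-\beta}.
\]
The equality $R-\beta=\pos{R-\beta}$ holds because $S\neq\emptyset$ forces $R\ge\norm{u}_2>\beta$.

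Combining the matching upper and lower bounds in each case gives exactly the two-branch formula of \cref{thm:const_bias}.
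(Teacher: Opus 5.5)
Your proof is correct. For $\beta<0$ and for the attaining examples it matches the paper's argument. The paper takes $e=0$ when $R\le\beta$, but your single choice $e=R\delta_1$ covers both subcases, since the other coordinates contribute $\pos{-\beta}=0$.

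The genuine difference is in the upper bound for $\beta\ge 0$. The paper argues in the reverse direction. With $y=\vpos{e-\beta\one}$ and $t=\norm{y}_2$, it writes $e_i=\beta+y_i$ on the support $S$ of $y$ and lower-bounds the \emph{input} norm:
$\norm{e}_2^2\ge\sum_{i\in S}(\beta+y_i)^2=\beta^2|S|+2\beta\norm{y}_1+t^2\ge(t+\beta)^2$.
This uses only $|S|\ge 1$ and $\norm{y}_1\ge\norm{y}_2$. You instead bound the \emph{output} directly, by dominating the shift $u\mapsto u-\beta\one_S$ coordinatewise by the radial rescaling $u\mapsto(1-\beta/\norm{u}_2)\,u$, whose norm is exactly $\norm{u}_2-\beta$.

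Both routes give the same intermediate estimate $\norm{\vpos{e-\beta\one}}_2\le\pos{\norm{e_S}_2-\beta}$. Both also expose the single-spike extremizer: in the paper through the slack in $|S|\ge 1$ and $\norm{y}_1\ge\norm{y}_2$, in yours through $u_i\le\norm{u}_2$.

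The paper's version is a shorter algebraic step. It avoids the too-weak expansion of $\norm{u-\beta\one_S}_2^2$ that you noted by expanding $\norm{e}_2^2$ instead. Yours is a little longer but has a clean geometric reading: lowering each active entry by $\beta$ shrinks it at least as much as rescaling the whole active vector down to norm $\norm{u}_2-\beta$.
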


\begin{proof}
\noindent\\
\textbf{Case 1: $\beta\ge 0$.}
Write $y:=\vpos{e-\beta\mathbf 1}\in\R^m_{\ge 0}$ and $t:=\norm{y}_2$.

Let $S:=\{i: y_i>0\}$ (if $t=0$ there is nothing to prove, otherwise $R > \beta$).

For $i\in S$ we have
$e_i=\beta+y_i$, hence
\[
\norm{e}_2^2 \;\ge\; \sum_{i\in S}(\beta+y_i)^2
\;=\; \beta^2|S| + 2\beta\sum_{i\in S}y_i + \sum_{i\in S}y_i^2.
\]
Since $y\ge 0$, we have $\sum_{i\in S}y_i=\norm{y}_1\ge \norm{y}_2=t$, and also $|S|\ge 1$.
Therefore
\[
\norm{e}_2^2 \;\ge\; \beta^2 + 2\beta t + t^2 \;=\; (t+\beta)^2,
\]
so $\norm{e}_2\ge t+\beta$. Under $\norm{e}_2\le R$ this yields $t\le R-\beta$, i.e.,
$\norm{\vpos{e-\beta\mathbf 1}}_2\le (R-\beta)_+$.

Equality holds by taking
\begin{align*}
R\le\beta: &\quad e=0, \\
R>\beta: &\quad e=R\delta_1.
\end{align*}

\medskip
\noindent\textbf{Case 2: $\beta<0$.}
Write $\beta=-c$ with $c>0$. Then for any $e$,
\[
\norm{\vpos{e-\beta\mathbf 1}}_2
=\norm{\vpos{e+c\mathbf 1}}_2
\le \norm{e+c\mathbf 1}_2
\le \norm{e}_2 + c\norm{\mathbf 1}_2
\le R + c\sqrt m
= R-\beta\sqrt m.
\]
Equality holds for $e=\frac{R}{\sqrt m}\mathbf 1$.
\end{proof}


\subparagraph{General bias vector}
\begin{theorem}[Worst-case bound for general bias]\label{thm:general_bias}
Let $b\in\R^m$ and $R\ge 0$. Then
	\[
	\sup_{\norm{e}_2\le R} \norm{\vpos{e-b}}_2 =
	\begin{cases}
	\max_{i\in[m]} (R - b_i)_+, &  b \ge 0 \;\; \text{componentwise}, \\[1em]
	R + \norm{b}_2, &  b < 0 \;\; \text{componentwise}.
	\end{cases}
	\]
\end{theorem}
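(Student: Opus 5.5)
The plan is to mirror the proof of \cref{thm:const_bias}, treating the two sign regimes separately. In each regime we prove an upper bound on $\norm{\vpos{e-b}}_2$ for every $e$ with $\norm{e}_2\le R$, and then exhibit an explicit $e$ that attains it.

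\textbf{Case $b\ge 0$ componentwise.} Put $y:=\vpos{e-b}\ge 0$, $t:=\norm{y}_2$ and $S:=\{i: y_i>0\}$. If $t=0$ there is nothing to prove. Otherwise every $i\in S$ satisfies $e_i=b_i+y_i$, so
\[
\norm{e}_2^2\ge \sum_{i\in S}(b_i+y_i)^2=\sum_{i\in S}b_i^2+2\sum_{i\in S}b_iy_i+t^2 .
\]
With $\beta_{\min}:=\min_{i\in S}b_i\ge 0$ we have $\sum_{i\in S}b_i^2\ge \beta_{\min}^2$ (since $|S|\ge1$) and $\sum_{i\in S} b_iy_i\ge \beta_{\min}\norm{y}_1\ge \beta_{\min}t$. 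Hence $\norm{e}_2\ge t+\beta_{\min}$, which gives
\[
t\le R-\beta_{\min}\le \max_i(R-b_i)_+ .
\]
Note that the argument only uses the minimum of $b$ over the support, so the bound reduces to the uniform-bias argument with $\beta=\min_i b_i$. For attainment, take $i^\star\in\arg\min_i b_i$. If $R>b_{i^\star}$, set $e=R\delta_{i^\star}$, which gives $\norm{\vpos{e-b}}_2=R-b_{i^\star}$; here we use $b\ge0$ so that the other coordinates contribute $(-b_j)_+=0$. If $R\le b_{i^\star}$, set $e=0$, which gives the value $0$.

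\textbf{Case $b<0$ componentwise.} The upper bound follows from $\norm{\vpos{e-b}}_2\le\norm{e-b}_2\le \norm{e}_2+\norm{b}_2\le R+\norm{b}_2$, using the triangle inequality as in Case 2 of \cref{thm:const_bias}. For attainment, choose $e=R\,(-b)/\norm{b}_2$, which is aligned with $-b$. Then
\[
e-b=(-b)\bigl(1+R/\norm{b}_2\bigr)
\]
has all coordinates positive, so the positive part changes nothing and its norm equals $\norm{b}_2+R$.

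The main obstacle is the first case. One needs to see that the lower bound $\sum_{i\in S}b_i^2+2\sum_{i\in S} b_iy_i\ge \beta_{\min}^2+2\beta_{\min}t$ can be closed using only the minimum bias on the active set; this is what uses the nonnegativity of $b$ and of $y$. The attaining configuration also has to avoid leaking positive mass into other coordinates, and this again relies on $b\ge 0$. The second case is routine; the only point to check is that alignment with $-b$ makes every coordinate of $e-b$ positive, so that the inequality $\norm{\vpos{u}}_2\le\norm{u}_2$ holds with equality.
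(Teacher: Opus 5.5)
Your proof is correct and is essentially the paper's argument. The case $b<0$ is identical: the triangle inequality gives the bound, and $e=R(-b)/\norm{b}_2$ attains it. For $b\ge 0$ the only difference is presentation. The paper uses the componentwise domination $\vpos{e-b}\le\vpos{e-b_{i^\star}\one}$ with $b_{i^\star}=\min_i b_i$ and then cites \cref{thm:const_bias}, whereas you re-run that uniform-bias argument directly with $\beta_{\min}$ taken over the support, which rests on the same inequalities.
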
	

\begin{proof}
\noindent\\
\textbf{Case 1: $b < 0$.} 
\[\norm{(e-b)_+}_2\le\norm{e-b}_2\le R+\norm{b}_2.\]
Tightness: $e=\frac{R}{\norm{b}_2}(-b)$ gives exactly $R+\norm{b}_2$.

\medskip
\noindent\textbf{Case 2: $b \ge 0$.} 
For $i^\star=\arg\min_i b_i$, let $b_c=b_{i^\star}\mathbf{1}$. Then $(e-b)_+\le(e-b_c)_+$ componentwise, so
\[
\norm{(e-b)_+}_2\le\norm{(e-b_c)_+}_2\overset{\eqref{thm:const_bias}}{\le}(R-b_{i^\star})_+=\max_i(R-b_i)_+.
\]
Tightness: $e=R\delta_{i^\star}$ for $i^\star=\arg\max_i(R-b_i)_+$ achieves equality.
\end{proof}

\begin{remark}
Theorem~\ref{thm:general_bias} identifies two regimes in which the worst-case residual
can be characterized exactly: a fully negative bias, where the bound grows linearly
with $\|b_-\|_2$, and a fully nonnegative bias, where the optimum concentrates on a
single coordinate.
\end{remark}

\begin{corollary}[Main deterministic bound]\label{thm:main}
Let $b\in\R^m$ and $R\ge 0$ be such that $\|\Delta W x\|_2 \le R$.
Then for all $x \in \R^d$,
\[
\mathcal{E}_b(x) =
\frac{\norm{A(x)-B_b(x)}_2}{\norm{x}_2}
\le
\begin{cases}
\displaystyle \frac{\max_{i\in[m]} (R - b_i)_+}{\|x\|_2}, & b \ge 0 \;\; \text{componentwise},\\[0.75em]
\displaystyle \frac{R + \|b\|_2}{\|x\|_2}, & b < 0 \;\; \text{componentwise}.
\end{cases}
\]
\end{corollary}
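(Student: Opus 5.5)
The corollary follows by chaining Lemma~\ref{lem:shifted_residual} with Theorem~\ref{thm:general_bias}, then dividing by $\norm{x}_2$. We assume $x\neq 0$ throughout, as in the setup; this is needed for $\mathcal{E}_b(x)$ to be defined.

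First, fix $x$ and set $e:=\res(x)=\Delta W x$. Lemma~\ref{lem:shifted_residual} gives the deterministic inequality
\[
\norm{A(x)-B_b(x)}_2 \le \norm{\vpos{e-b}}_2 .
\]
Second, by hypothesis $\norm{e}_2\le R$, so $e$ is admissible in the supremum. Hence
\[
\norm{\vpos{e-b}}_2 \le \sup_{\norm{e'}_2\le R}\norm{\vpos{e'-b}}_2 .
\]
Theorem~\ref{thm:general_bias} evaluates this supremum in the two sign regimes. If $b\ge 0$ componentwise, it equals $\max_{i\in[m]}(R-b_i)_+$. If $b<0$ componentwise, it equals $R+\norm{b}_2$. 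Only the upper-bound direction of that theorem is used; the tightness constructions are not needed.

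Finally, dividing both sides by $\norm{x}_2>0$ yields the two stated cases for $\mathcal{E}_b(x)$.

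There is no real obstacle here. The one point to flag is interpretive: $R$ is a bound depending on $x$ (for example $R=\norm{\Delta W}_2\norm{x}_2$, or, in the SVD case, $\sigma_{r+1}\norm{x}_2$). The corollary should be read as holding for each $x$ with its corresponding admissible $R$. If one wants a uniform statement, substitute $R=\norm{\Delta W}_2\norm{x}_2$. In the $b<0$ case this gives $\mathcal{E}_b(x)\le \norm{\Delta W}_2 + \norm{b}_2/\norm{x}_2$, a relative error controlled by the spectral truncation error. This remark is optional and does not affect the proof.
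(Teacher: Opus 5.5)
Your proposal is correct and matches the paper's intended derivation: the paper states this corollary without a separate proof because it is exactly Lemma~\ref{lem:shifted_residual} combined with the upper-bound half of Theorem~\ref{thm:general_bias} (applied to $e=\Delta W x$ with $\norm{e}_2\le R$), followed by division by $\norm{x}_2>0$. Your remark that $R$ should be read as an $x$-dependent bound, for example $R=\norm{\Delta W}_2\norm{x}_2$, is the right way to resolve the statement's loose quantifier order, and it is consistent with how Corollary~\ref{cor:svd} substitutes $R=\sigma_{r+1}\norm{x}_2$.
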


\begin{corollary}[Truncated SVD]\label{cor:svd}
	By Eckart--Young--Mirsky, if $W_r$ is the rank-$r$ truncated SVD of $W^\mathrm{gate}$, then
$\|\Delta W\|_2 = \sigma_{r+1}(W^\mathrm{gate}) = \sigma_{r+1}$, so for all $x\in\R^d$,
\[
\mathcal{E}_b(x)
\le
\begin{cases}
\displaystyle \max_{i\in[m]} \left(\sigma_{r+1} - \frac{b_i}{\norm{x}_2} \right)_+, & b \ge 0 \;\; \text{componentwise},\\[0.75em]
\displaystyle \sigma_{r+1} + \frac{\norm{b}_2}{\norm{x}_2}, & b < 0 \;\; \text{componentwise}.
\end{cases}
\]
\end{corollary}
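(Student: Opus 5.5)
The plan is to obtain Corollary~\ref{cor:svd} directly from Corollary~\ref{thm:main}, with the specific choice $R := \sigma_{r+1}\norm{x}_2$.

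First, we bound the residual. If $W_r$ is the rank-$r$ truncated SVD of $W = W^\mathrm{gate}$, then $\Delta W = \sum_{j>r}\sigma_j u_j v_j^\top$, so $\norm{\Delta W}_2 = \sigma_{r+1}$. This is the spectral-norm form of Eckart--Young--Mirsky, and it can also be read off directly from the SVD. Hence, for every $x$,
\[
\norm{\Delta W x}_2 \le \norm{\Delta W}_2\,\norm{x}_2 = \sigma_{r+1}\norm{x}_2 ,
\]
so the hypothesis $\norm{\Delta W x}_2 \le R$ of Corollary~\ref{thm:main} holds with this $R$.

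Second, we substitute into the two regimes. For $x\neq 0$, which is the paper's standing assumption, we do the following.
\begin{itemize}
\item In the case $b\ge 0$, Corollary~\ref{thm:main} gives the bound $\max_i(\sigma_{r+1}\norm{x}_2 - b_i)_+/\norm{x}_2$. Since $t\mapsto t_+$ is positively homogeneous, dividing by $\norm{x}_2>0$ gives $(c\,t)_+ = c\,t_+$ for $c>0$, and the expression becomes $\max_i(\sigma_{r+1} - b_i/\norm{x}_2)_+$.
\item In the case $b<0$, the bound $(\sigma_{r+1}\norm{x}_2 + \norm{b}_2)/\norm{x}_2$ simplifies immediately to $\sigma_{r+1} + \norm{b}_2/\norm{x}_2$.
\end{itemize}

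The only delicate point is whether Corollary~\ref{thm:main} may be applied with an $x$-dependent $R$. It may: that corollary is a pointwise statement. It follows from Lemma~\ref{lem:shifted_residual} and Theorem~\ref{thm:general_bias} applied at the fixed vector $e=\Delta W x$, which lies in the ball of radius $R$. Choosing $R$ per $x$ is therefore legitimate, and the case $x=0$ is excluded by the definition of $\mathcal{E}_b$.

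As an aside, the same argument goes through in the data-whitened setting if $\sigma_{r+1}$ is replaced by $\norm{\Delta W}_2$. The stated corollary, however, concerns the plain truncated SVD of $W^\mathrm{gate}$, so no such modification is needed here.
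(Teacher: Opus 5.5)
Your proposal is correct and follows the same route the paper intends. You bound $\norm{\Delta W x}_2 \le \sigma_{r+1}\norm{x}_2$ using the spectral norm of the discarded SVD tail, apply Corollary~\ref{thm:main} pointwise with $R=\sigma_{r+1}\norm{x}_2$, and divide by $\norm{x}_2>0$ using positive homogeneity of $t\mapsto t_+$. Your remark that $R$ may depend on $x$ is also sound: Corollary~\ref{thm:main} is a pointwise consequence of Lemma~\ref{lem:shifted_residual} and Theorem~\ref{thm:general_bias}, which clarifies the loose quantifier order in the paper's statement.
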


\begin{remark}
	While rather loose in practice due to data distribution, this bound cleanly separates approximation error $\sigma_{r+1}$ from gating bias $b$, revealing bias addition's influence on residuals.
\end{remark}

\subsection{Greedy threshold selection}
\label{sec:math-proofs-greedy-threshold}

\paragraph{Greedy threshold calibration under a fixed sparsity budget}
\label{sec:greedy_calibration}

We study a data-dependent calibration rule for componentwise thresholds
$\tau\in\R^m$ under a fixed \emph{drop budget} $K$ (equivalently, fixed sparsity).
Throughout, the gate is
\[
g_\tau(x)=\one[AB x > \tau],
\]
where $AB$ denotes any rank-$r$ approximation of $W^\mathrm{gate} \in \R^{m \times d}$; the specific choice used in our experiments is given in \cref{subsec:svd-based-decomposition}.

\paragraph{Proxy scores and damage weights}
\label{subsec:scores_weights}

Let $\mathcal D=\{x^{(t)}\}_{t=1}^T$ be a dataset and write $[m]=\{1,\dots,m\}$,
$[T]=\{1,\dots,T\}$.
For each coordinate $i\in[m]$ and sample $t\in[T]$ define the proxy score
\[
s_i^{(t)} := (AB x^{(t)})_i,
\]
and define the \emph{marginal damage weight}
\begin{equation}
\label{eq:damage_weight}
w_i^{(t)}
:=
\left|\mathrm{ReLU}\left(W^\mathrm{gate}_{i, :}x^{(t)}\right) \cdot W^\mathrm{up}_{i, :}x^{(t)}\right|^2 \left\| W^\mathrm{down}_{:, i} \right\|_2^2
\qquad (\ge 0).
\end{equation}
Given thresholds $\tau$, the empirical output error incurred by dropping
$(i,t)$ whenever $s_i^{(t)}\le \tau_i$ is
\begin{equation}
\label{eq:empirical_loss}
\mathcal L(\tau)
=
\sum_{t=1}^T\sum_{i=1}^m
w_i^{(t)}\;\one[s_i^{(t)}\le \tau_i].
\end{equation}

\begin{remark}[Origin of the weight \eqref{eq:damage_weight}]
\label{rem:origin_weight}
Consider the standard gated FFN decomposition
\[
z(x)=\relu(W^\mathrm{gate}x)\odot (W^\mathrm{up}x),
\qquad
h(x)=W^\mathrm{down}z(x).
\]
Then $h(x)$ decomposes into rank-one terms:
\[
h(x)=\sum_{i=1}^m z_i(x)\,W^\mathrm{down}_{:,i},
\quad
z_i(x)=\relu(W^\mathrm{gate}_{i,:} x) \cdot W^\mathrm{up}_{i,:}x.
\]
Dropping the $i$-th hidden coordinate removes exactly the contribution
$v_i(x)=z_i(x) W^\mathrm{down}_{:, i}$, hence
\[
\|v_i(x)\|_2^2
=
\left|\mathrm{ReLU}\left(W^\mathrm{gate}_{i, :}x\right) \cdot W^\mathrm{up}_{i, :}x\right|^2 \left\| W^\mathrm{down}_{:, i} \right\|_2^2,
\]
which matches \eqref{eq:damage_weight} for $x=x^{(t)}$.
\end{remark}

\paragraph{Prefix formulation and fixed drop budget}
\label{subsec:prefix_budget}

Fix $i\in[m]$. Sort samples in nondecreasing order of proxy scores:
\[
s_{i,(1)} \le s_{i,(2)} \le \cdots \le s_{i,(T)},
\]
and let $w_{i,(j)}$ denote the corresponding permuted weights.
Define
\[
C_i(k):=\sum_{j=1}^k w_{i,(j)},\quad C_i(0)=0,
\qquad
\Delta_i(k):=C_i(k)-C_i(k-1)=w_{i,(k)}.
\]

For each coordinate $i$, setting $\tau_i$ so exactly $k_i$ samples satisfy $s_i^{(t)}\le \tau_i$ incurs cost $C_i(k_i)$. The total loss is thus
\[
\mathcal L(\tau)=\sum_{i=1}^m C_i(k_i), \qquad k_i := \#\{t:\ s_i^{(t)}\le \tau_i\}\in\{0,\dots,T\}.
\]

With fixed drop budget $K\in\{0,1,\dots,mT\}$, threshold selection reduces to
\begin{equation}
\label{eq:fixed_budget_problem}
\min_{\substack{k_i\in\{0,\dots,T\}\\ \sum_{i=1}^m k_i=K}}
\ \sum_{i=1}^m C_i(k_i).
\end{equation}

\paragraph{Greedy optimality}
\label{subsec:greedy_and_nearopt}

\begin{definition}[Global optimality]
	\label{def:global_opt}
	A feasible solution $k^\star=(k_1^\star,\dots,k_m^\star)$ with $\sum_i k_i^\star=K$ 
	is \emph{globally optimal} for \eqref{eq:fixed_budget_problem} if
	\[
	\sum_{i=1}^m C_i(k_i^\star) \le \sum_{i=1}^m C_i(k_i)
	\quad\text{for all feasible $k=(k_1,\dots,k_m)$.}
	\]
\end{definition}

\begin{assumption}[Monotone marginals within each coordinate]
\label{ass:monotone_marginals}
For every $i\in[m]$, the marginal increments are nondecreasing:
\[
\Delta_i(1)\le \Delta_i(2)\le \cdots \le \Delta_i(T).
\]
\end{assumption}

\begin{theorem}[Greedy is globally optimal]
\label{th:greedy_optimal}
Initialize $k_i=0$ for all $i$.
For $\ell=1,2,\dots,K$ (drop budget) choose
\[
i_\ell \in \arg\min_{i:\,k_i<T}\ \Delta_i(k_i+1),
\qquad\text{and update}\qquad
k_{i_\ell}\leftarrow k_{i_\ell}+1.
\]
Under~\cref{ass:monotone_marginals} the resulting $k$ is globally optimal for \eqref{eq:fixed_budget_problem}.

Moreover, it can be realized by thresholds $\tau$ via
\[
\tau_i \in [\,s_{i,(k_i)},\, s_{i,(k_i+1)}\,),
\]
with conventions $s_{i,(0)}:=-\infty$ and $s_{i,(T+1)}:=+\infty$.
\end{theorem}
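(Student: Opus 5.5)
Under \cref{ass:monotone_marginals} each $C_i$ is a discrete convex function of $k_i$ on $\{0,\dots,T\}$. Problem \eqref{eq:fixed_budget_problem} is therefore a separable convex resource-allocation problem, and the plan is to prove greedy optimality by an exchange argument on the multiset of chosen marginal increments.

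\textbf{Step 1 (reformulation as selecting increments).} For any feasible $k$, write $\sum_i C_i(k_i)=\sum_i\sum_{j=1}^{k_i}\Delta_i(j)$. This is the total weight of a family $F(k)=\{(i,j): j\le k_i\}$ of $K$ ``items'' drawn from the ground set $\{(i,j): i\in[m],\,j\in[T]\}$, where item $(i,j)$ has weight $\Delta_i(j)$. Such a family is prefix-closed in each coordinate. Hence the optimum of \eqref{eq:fixed_budget_problem} is at least the minimum total weight of any $K$ items, with no prefix constraint. The key observation is that under \cref{ass:monotone_marginals} this relaxation has a minimizer that is prefix-closed. Take any set of $K$ smallest items, with ties broken by preferring smaller $j$ within a coordinate. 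If $(i,j)$ is chosen and $(i,j')$ with $j'<j$ is not, swapping them does not increase the weight, since $\Delta_i(j')\le\Delta_i(j)$. So an optimal prefix-closed set exists, and the optimum of \eqref{eq:fixed_budget_problem} equals the sum of the $K$ smallest values in the multiset $\{\Delta_i(j)\}$.

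\textbf{Step 2 (greedy picks $K$ smallest items).} Show by induction on $\ell$ that the greedy set after $\ell$ steps consists of $\ell$ items whose weights are the $\ell$ smallest in the multiset. At each step the available candidates are exactly the ``next'' items $(i,k_i+1)$. By monotonicity every unchosen item $(i,j)$ with $j>k_i+1$ has weight at least $\Delta_i(k_i+1)$. Hence the minimum over candidates equals the minimum over all unchosen items. Since each selection is a minimum over the remainder, the greedy set attains the sum of the $K$ smallest weights, and by Step 1 it is globally optimal. Ties require care: the argument should compare sorted weight sequences, not identities of items.

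\textbf{Step 3 (realization by thresholds).} For each $i$, pick $\tau_i\in[s_{i,(k_i)},s_{i,(k_i+1)})$ with the stated conventions. Then exactly the samples $(1),\dots,(k_i)$ satisfy $s\le\tau_i$, so the count equals $k_i$ and the loss is $C_i(k_i)$. The main obstacle here is ties among the scores: if $s_{i,(k_i)}=s_{i,(k_i+1)}$, the interval is empty. I would handle this by assuming the scores are distinct, or by noting that ties can be broken by an infinitesimal perturbation. I expect Step 1's prefix-closure exchange to be the only substantive part; the rest is bookkeeping.
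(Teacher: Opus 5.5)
Your proof is correct, but it takes a different route from the paper's. The paper runs an exchange argument against an arbitrary optimal solution $O$. It takes the first greedy increment $a=\Delta_p(k_p+1)\notin O$ and removes from $O$ the last increment $b=\Delta_q(j)$ of some coordinate with $b\in O\setminus G$. It checks that the swap keeps both prefix constraints, and uses the greedy choice plus monotonicity to get $a\le\Delta_q(k_q+1)\le b$. Iterating turns $O$ into $G$.

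You instead bound every feasible solution from below by the unconstrained relaxation, namely the sum of the $K$ smallest values in $\{\Delta_i(j)\}$. You then show greedy attains this bound, because under monotonicity its candidate minimum is a minimum over \emph{all} unchosen items. Your route gives an explicit formula for the optimal value and shows the prefix constraints are non-binding. It also avoids the exchange bookkeeping: why $b$ may be taken as the last increment of $q$, why $q\neq p$, why $q$ was still available when $a$ was picked, and termination.

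Your Step~1 prefix-closure swap, which you call the substantive part, is actually redundant. The trivial bound ``optimum $\ge$ sum of the $K$ smallest increments'' together with Step~2 suffices, since greedy's own set is feasible and attains that bound.

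On Step~3, you are right that the interval $[s_{i,(k_i)},s_{i,(k_i+1)})$ is empty when $s_{i,(k_i)}=s_{i,(k_i+1)}$. No threshold then drops exactly $k_i$ samples. The paper's one-line justification (``follows from the definition of $k_i$'') silently assumes strict inequality at each cut. The right repair is to assume that, for example distinct proxy scores, which holds almost surely for continuous data. An infinitesimal perturbation of the scores does not really fix it, because at inference the gate compares the unperturbed $ABx$ against $\tau$.
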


\begin{proof}
The problem is equivalent to selecting exactly $K$ increments $\Delta_i(j)$ while respecting prefix constraints: selecting $\Delta_i(j)$ requires all $\Delta_i(1),\dots,\Delta_i(j-1)$.

The greedy algorithm selects the smallest available ``next increment'' $\Delta_i(k_i+1)$ at each step. Let $G$ be greedy's sequence of $K$ increments, and $O$ any optimal sequence.

Suppose $G\neq O$. Let $a=\Delta_p(k_p+1)$ be the first (chronologically) greedy increment not in $O$. Since $|G|=|O|=K$, there exists $b=\Delta_q(j)\in O\setminus G$.

Choose $b$ as the \emph{last} increment of coordinate $q$ in $O$ (maximal $j$). Removing $b$ preserves $q$'s prefix constraint. Adding $a$ extends $p$'s prefix, also preserving feasibility.

By greedy choice, $a\le\Delta_q(k_q+1)$ where $k_q$ counts greedy's selections in $q$ up to that point. Since $b\notin G$ but $O$ includes it, $j\ge k_q+1$. By \cref{ass:monotone_marginals} (nondecreasing increments), $\Delta_q(k_q+1)\le\Delta_q(j)=b$.

Thus $a\le b$, so swapping $b\leftrightarrow a$ preserves feasibility and does not increase cost. Repeating such exchanges transforms $O$ into $G$, proving optimality.

The threshold realization follows from the definition of $k_i$.
\end{proof}

\begin{remark}[Role of rank-$r$ approximation and empirical validation]
\label{rem:svd_proxy}

\Cref{th:greedy_optimal} requires monotone marginal increments after sorting by proxy scores (\cref{ass:monotone_marginals}).
This is \emph{not guaranteed}, since proxy scores $s_i^{(t)} = (ABx^{(t)})_i$ depend only on rank-$r$ approximation $AB\approx W^\mathrm{gate}$,
while damage, defined above, also involves $W^\mathrm{up}$ and $W^\mathrm{down}$.

If $AB$ approximates $W^\mathrm{gate}$ well \emph{on the input data distribution}, then $(ABx)_i\approx (W^\mathrm{gate}x)_i$, so proxy sorting should align with true damage ordering.
In practice, we perform greedy selection over non-overlapping $\eta$-step sums to mitigate data noise:
\[
S_i^{(m)}:=\sum_{j=(m-1)\eta+1}^{m\eta} w_{i,(j)},\quad m=1,\dots,\lfloor T/\eta\rfloor.
\]
We validate their monotonicity using normalized Kendall tau distance between group sums and their position indices:
\[
\tau_i^{\mathrm{K}} := 1 - \frac{K_d\bigl(\{S_i^{(m)}\}_{m=1}^M,\ \{m\}_{m=1}^M\bigr)}{M(M-1)},
\]
where $
K_d(\tau_1,\tau_2)=\#\bigl\{(m_1,m_2):\ m_1<m_2,\ 
[\tau_1(m_1)\lessgtr\tau_1(m_2)\land\tau_2(m_1)\gtrless\tau_2(m_2)]
\bigr\}
$ counts discordant pairs.

The use of Kendall tau distance is justified by the fact that it directly quantifies \emph{ordinal consistency} required by \cref{ass:monotone_marginals}: for all $m_1 < m_2$, we expect $S_i^{(m_1)} \leq S_i^{(m_2)}$.
Formally, $\tau_i^{\mathrm{K}}$ measures the fraction of concordant pairs while properly accounting for ties -- the precise condition for greedy optimality in \cref{th:greedy_optimal}.
On sample data used to calibrate biases, \Cref{fig:kendall-distance} shows stable Kendall $\tau_i^{\mathrm{K}}$ in $[0.9, 1]$, confirming proxy sorting $ABx \approx W^\mathrm{gate}x$ and the suitability of our greedy approach.

\begin{figure}
	\centering
	\includegraphics[scale=0.7]{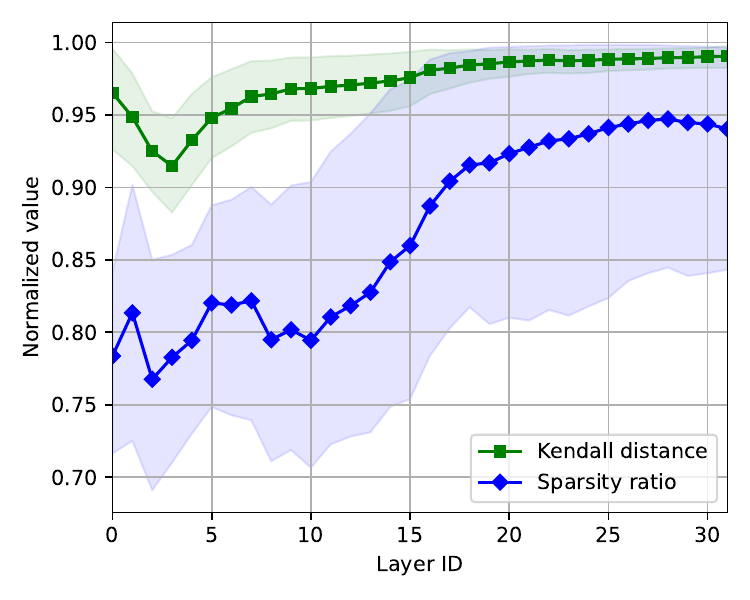}
	\caption{Mean Kendall $\tau_i^{\mathrm{K}}$ across neurons for proxy-sorted $\eta$-step sums from ProSparse-LLaMA2-7B.
	The x-axis shows layer number. For clarity, layer-wise sparsity ratios are depicted. The shaded regions denote the 10th to 90th percentile interval, illustrating the central 80\% of the data distribution. }
	\label{fig:kendall-distance}
\end{figure}

\end{remark}

\begin{remark}[Knapsack reformulation and practical optimality]
\label{ref:knapsack}

Problem \eqref{eq:fixed_budget_problem} is a $0$--$1$ knapsack problem with $mT$ items $\{\Delta_i(j)\}_{i=1}^m{}_{j=1}^T$, capacity $K$, and prefix constraints that selection of $\Delta_i(j)$ requires prior selection of $\Delta_i(1),\dots,\Delta_i(j-1)$.
While standard dynamic programming requires $O(mTK)$ time, our greedy algorithm runs in $O((K + m)\log m)$ via priority queue and is \emph{exactly optimal} under \cref{ass:monotone_marginals}, yielding superior efficiency.
\end{remark}


\section{Sparse FFN Inference Implementation}

\label{sec:inference-executors-details}
Unlike Deja Vu's \cite{liu2023deja} asynchronous pipeline, which leverages Ampere's \texttt{cuda::memcpy\_async} operations \cite{cuda-c-programming-guide}, our approach targets consumer-grade GPUs with compute capability 7.0 for broader applicability.
Predictors are inserted at the beginning of each FFN and sequentially generate sparsity patterns for their respective gate projections.
This design ensures minimal integration overhead with existing frameworks.

\paragraph{Custom CUDA Kernels for Sparse FFN}
We implement three CUDA kernels (\cref{alg:sparse-inference}). 
\begin{algorithm}[t]
	\caption{Sparse FFN Forward Pass on CUDA (Decoding Phase)}
	\label{alg:sparse-inference}
	\begin{algorithmic}
		\STATE \textbf{Input:} $W^\mathrm{gate}, W^\mathrm{up} \in \mathbb{R}^{D \times d}$, $W^\mathrm{down} \in \mathbb{R}^{d \times D}$, $x \in \mathbb{R}^d$, $A \in \mathbb{R}^{D \times r}$, $B \in \mathbb{R}^{r \times d}$, $b \in \mathbb{R}^D$
		\STATE \textbf{Output:} $x \in \mathbb{R}^d$
		\STATE $m \gets ABx + b$ \hfill \textit{\small (Predict sparsity pattern)}
		\STATE $m \gets \textsc{SparseGateKernel}(W^\mathrm{gate},\,x,\,m)$
		\STATE $x \gets \textsc{SparseUpAndMulKernel}(W^\mathrm{up},\,x,\,m)$
		\STATE $x \gets \textsc{SparseDownKernel}(W^\mathrm{down},\,x)$
		\STATE \textbf{return} $x$
	\end{algorithmic}
\end{algorithm}
The predictor’s output $m \in \mathbb{R}^D$ highlights active rows of $W^\mathrm{gate}$. Throughout gate and up projections each CUDA block processes one neuron, first verifying its activity via $m$.
Gate outputs are stored back into $m$ for reuse.
Key optimizations:
\begin{itemize}
    \item \textbf{Memory}: Column-major storage of $W^\mathrm{down}$ for coalesced memory access.
    \item \textbf{Precision}: Full precision register operations; half precision for memory reads and writes.
    \item \textbf{Determinism}: Avoid atomic additions via shared-memory partial sums in down projection.
\end{itemize}

Our kernel design diverges from existing ProSparse \cite{song2024prosparse} and Deja Vu \cite{liu2023deja} implementations. 
Both prior approaches rely on atomic addition operations within the down projection phase, introducing non-determinism and potential precision instability during floating-point accumulation. 
To address these limitations, our algorithm employs partial sum reduction implemented via shared memory. 
This technique enforces a predefined summation order and utilizes a hierarchical reduction strategy. 
Consequently, our approach guarantees deterministic outputs while simultaneously achieving enhanced numerical stability through reduced computational error propagation.
We postpone kernel fusion due to diminishing returns, prioritizing maintainability. 

\paragraph{NPU Solution for Sparse FFN}
We implement our NPU solution for sparse FFN, detailed in \cref{alg:sparse-npu}, leveraging high-level PyTorch instructions.  
\begin{algorithm}[t]
	\caption{Sparse FFN Forward Pass on CANN (Decoding Phase)}
	\label{alg:sparse-npu}
	\begin{algorithmic}
		\STATE \textbf{Input:} $W^{\text{gate}}, W^{\text{up}} \in \mathbb{R}^{D \times d}$, $W^{\text{down}} \in \mathbb{R}^{d \times D}$, $x \in \mathbb{R}^d$, $A \in \mathbb{R}^{D \times r}$, $B \in \mathbb{R}^{r \times d}$, $b \in \mathbb{R}^D$
		\STATE \textbf{Output:} $x \in \mathbb{R}^d$
		\STATE $m \gets ABx + b > 0$ \hfill \textit{\small (Predict sparsity pattern)}
		\STATE $p \gets  \textsc{nonzero}(m)$ \hfill \textit{\small (Obtain indices)}
		\STATE $g \gets \textsc{selrows}(W^{\text{gate}},\,p)$ \hfill \textit{\small (Select rows of $W^{\text{gate}}$ indexed by $p$)}
		\STATE $m_1 \gets \textsc{matmul}({g},\,x)$
		\STATE $u \gets \textsc{selrows}(W^{\text{up}},\,p)$ \hfill \textit{\small (Select rows of $W^{\text{up}}$ indexed by $p$)}
		\STATE $m_2 \gets \textsc{matmul}({u},\,x)$
		\STATE $m_3 \gets \textsc{relu}(m_1) \cdot m_2$
		\STATE $d \gets \textsc{selcols}(W^{\text{down}},\,p)$ \hfill \textit{\small (Select columns of $W^{\text{down}}$ indexed by $p$)}
		\STATE $x \gets \textsc{matmul}({d},\,m_3)$
		\STATE \textbf{return} $x$
	\end{algorithmic}
\end{algorithm}
The predictor’s output $m \in \mathbb{R}^D$ is used to construct smaller dense submatrices by retrieving the corresponding rows or columns. 
This approach enables highly efficient execution by utilizing the dedicated matrix multiplication accelerator unit (cube) inherent to the CANN NPU architecture \cite{chen2019davinci}.
$W^\mathrm{down}$ matrix is stored in column-major format.

\section{Extension to Non-ReLU Activations}
\label{sec:non-relu-based-llms}
The solution can in principle be applied to LLMs with non-ReLU activation functions, such as SiLU. However, the straightforward utilization may lead to substantial accuracy degradation on long generation tasks. The quality evaluation results for Qwen2-7B-Instruct model with SiLU activation function are presented at \cref{tab:silu-results}.
Potential modifications to the current solution may include transformations to the intermediate activations to mitigate errors that arise.

\begin{table}[h]
    \caption{Qwen2-7B-Instruct accuracy results, 25\% average predicted sparsity}
    \centering
    {
        \begin{tabular}{lcccccccccc}
            \toprule
            Method              & TriviaQA & ARC-C & ARC-E & Humaneval & MBPP & GSM8K & BBH & CMMLU & Average \\
                               & acc, \% & acc, \% & acc, \% & pass@1, \% & pass@1, \% & acc, \% & acc, \% & acc, \% & \\
            \midrule
            Dense mode         & 60.96   & 86.43 & 93.81 & 60.98     & 44.46     & 36.85 & 47.81 & 79.58 & 63.86 \\
            \midrule
            SVDP     & 51.48   & 86.43 & 93.81 & 49.39     & 36.55     & 37.83 & 47.61 & 79.58 & 60.34 \\
            \bottomrule
        \end{tabular}
    }
    \label{tab:silu-results}
\end{table}

\section{Detailed accuracy results}
\label{sec:detailed-acc-results}

\cref{tab:llama2-details-hp}, \cref{tab:mistral-details-hp}, \cref{tab:qwen2-details-hp} provide detailed evaluation results for three sparse LLMs equipped with our SVD-predictors.
{
\setlength{\tabcolsep}{4pt}
\begin{table}
	\caption{ProSparse-LLaMA2-7B evaluation results, $r=256$}
	\centering
	{
		\footnotesize
		\begin{tabular}{lccccccccccc}
			\toprule
			Method				& $s$  &  TriviaQA & ARC-E   & ARC-C  & Humaneval & MBPP & GSM8K & BBH & CMMLU & Average & E2E \\
								& 	   & acc, \%   & acc, \% &acc, \% &pass@1, \% &pass@1, \% &acc, \% &acc, \% &acc, \% & Score & Speedup \\ \midrule
			Dense mode			& -    &  62.94    & 75.42   & 53.58  & 17.68     & 22.59 & 11.45 & 35.85 & 31.84 & 38.92& 1.00$\times$ \\ \midrule
			\multirow{4}{*}{SVDP} 			& 0.4  & 62.38	&75.29	&53.67	&17.68	&22.38	&10.61	&35.46	&31.81	&38.66			& 1.61$\times$ \\ 
			 								& 0.5   & 62.74	&75.34	&53.67	&17.68	&22.07	&11.22	&35.58	&31.94	&38.78 	&  1.64$\times$  \\ 
			 								& 0.6   & 62.90	&75.29	&53.75	&18.29	&21.97	&10.54	&35.54	&31.91	&38.77   & 1.67$\times$  \\ 
			 								& 0.7  & 62.04	&75.42	&53.58	&17.07	&21.87	&11.22	&35.43	&31.90	&38.57   & 1.71$\times$  \\ \bottomrule
			\end{tabular}
	}
	\label{tab:llama2-details-hp}
\end{table}
}
{
\setlength{\tabcolsep}{4pt}
\begin{table}
	\caption{TurboSparse-Mistral-Instruct evaluation results, $r=352$} %
	\centering
	{
		\footnotesize
		\begin{tabular}{lccccccccccc}
			\toprule
			Method				& $s$  &  TriviaQA & ARC-E   & ARC-C  & Humaneval & MBPP & GSM8K & BBH & CMMLU & Average & E2E \\
								& 	   & acc, \%   & acc, \% &acc, \% &pass@1, \% &pass@1, \% &acc, \% &acc, \% &acc, \% & Score & Speedup \\ \midrule
			Dense mode			& -    &  48.58 & 87.21 & 71.67 & 34.15 & 42.20 & 61.71 & 42.35 & 46.52 & 54.30 & 1.00$\times$ \\ \midrule
			\multirow{4}{*}{SVDP} 			& 0.4   & 45.46	&87.08&	71.67&	32.93&	42.09&	61.49&	42.46&	46.55&	53.72			& 1.67$\times$ \\ 
			 								& 0.5   & 43.26	&87.04&	71.76&	33.54&	42.71&	61.41&	42.32&	46.47&	53.56 & 1.73$\times$ \\ 										
			 								& 0.6   & 40.63	&87.08&	72.10&	33.54&	42.09&	61.41&	42.19&	46.43&	53.18 & 1.79$\times$  \\ 
			 								& 0.7   & 39.21	&87.21&	72.10&	37.20&	42.09&	60.96&	42.33&	46.45&	53.44 & 1.84$\times$  \\ \bottomrule
			\end{tabular}
	}
	\label{tab:mistral-details-hp}
\end{table}
}
{
\setlength{\tabcolsep}{4pt}
\begin{table}
	\caption{SparseQwen2-7B evaluation results, $r=512$} %
	\centering
	{
		\footnotesize
		\begin{tabular}{lccccccccccc}
			\toprule
			Method				& $s$  &  TriviaQA & ARC-E   & ARC-C  & Humaneval & MBPP & GSM8K & BBH & CMMLU & Average & E2E \\
								& 	   & acc, \%   & acc, \% &acc, \% &pass@1, \% &pass@1, \% &acc, \% &acc, \% &acc, \% & Score & Speedup \\ \midrule
			Dense mode			& -    & 50.76 &	91.92&	83.87	&64.02&	46.3	&77.1	&48.42&	64.35&	65.84 & 1.00$\times$ \\ \midrule
			\multirow{4}{*}{SVDP} 			& 0.4   & 50.87&	91.92&	83.87&	62.80&	44.56&	76.50&	48.40&	64.35&	65.41			& 1.78$\times$ \\ 
			 								& 0.5   & 49.96&	91.92&	83.87&	62.80&	43.53&	76.80&	48.40&	64.35&	65.20 & 1.81$\times$ \\ 										
			 								& 0.6   & 48.82&	91.92&	83.87&	63.41&	43.53&	76.80&	48.34&	64.35&	65.13&  1.85$\times$  \\ 
			 								& 0.7   & 47.42&	91.92&	83.87&	62.20&	41.99&	75.82&	48.36&	64.35&	64.49&  1.86$\times$  \\ \bottomrule
			\end{tabular}
	}
	\label{tab:qwen2-details-hp}
\end{table}
}

\cref{tab:components-breakdown-llama2}, \cref{tab:components-breakdown-mistral}, \cref{tab:components-breakdown-qwen2} provide a detailed ablation study of SVD sparsity estimators for three sparse LLMs.
\begin{table*}[t]
    \centering
	\caption{SVD sparsity estimator ablation study for ProSparseLLAMA2-7B.}		
    \label{tab:components-breakdown-llama2}
    \footnotesize
	\begin{tabular}{p{2.5cm}ccccccccc}
		\toprule
		& TriviaQA & ARC-E & ARC-C & Humaneval & MBPP & GSM8K & BBH & CMMLU & Average \\
		\midrule
		Dense baseline & 62.94 & 75.42 & 53.58 & 17.68 & 22.59 & 11.45 & 35.85 & 31.84 & 38.92 \\
		\midrule
		\multicolumn{10}{l}{\textit{Parallel pipeline}} \\
		\hspace{1em}Naive SVD & 58.19 & 69.91 & 49.32 & 10.98 & 13.35 & 6.60 & 33.98 & 30.28 & 34.08 \\
		\hspace{1em}+ Data whitening & 53.85 & 75.34 & 53.75 & 16.46 & 19.92 & 10.16 & 35.46 & 31.83 & 37.10 \\
		\hspace{1em}+ Bias calibration & 62.66 & 75.38 & 53.58 & 17.68 & 22.18 & 10.54 & 35.86 & 31.85 & 38.72 \\
		\midrule
		\multicolumn{10}{l}{\textit{Sequential pipeline}} \\
		\hspace{1em}Naive SVD & 34.53 & 3.58 & 4.35 & 0.61 & 0.21 & 1.90 & 11.12 & 21.23 & 9.69 \\
		\hspace{1em}+ Data whitening & 27.58 & 75.21 & 53.33 & 8.54 & 11.40 & 6.90 & 33.71 & 31.34 & 31.00 \\
		\hspace{1em}+ Bias calibration & 62.74 & 75.34 & 53.67 & 17.68 & 22.07 & 11.22 & 35.58 & 31.94 & 38.78 \\
		\bottomrule
	\end{tabular}
\end{table*}

\begin{table*}[t]
    \centering
	\caption{SVD sparsity estimator ablation study for SparseQwen2-7B.}		
    \label{tab:components-breakdown-qwen2}
    \footnotesize
	\begin{tabular}{p{2.5cm}ccccccccc}
		\toprule
		& TriviaQA & ARC-E & ARC-C & Humaneval & MBPP & GSM8K & BBH & CMMLU & Average \\
		\midrule
		Dense baseline &50.76 &	91.92	&83.87&	64.02&	46.30&	77.10&	48.42&	64.35&	65.84		\\
		\midrule
		\multicolumn{10}{l}{\textit{Parallel pipeline}} \\
		\hspace{1em}Naive SVD & 46.24	&91.92	&83.87	&53.66	&34.39&	74.00	&48.00&	64.35	&62.05		\\
		\hspace{1em}+ Data whitening & 42.92	&91.92&	83.87&	61.59&	43.02&	75.89&	48.37&	64.35&	63.99		\\
		\hspace{1em}+ Bias calibration & 50.20	&91.92&	83.87&	62.20&	45.28&	76.65&	48.45&	64.35&	65.37		\\
		\midrule
		\multicolumn{10}{l}{\textit{Sequential pipeline}} \\
		\hspace{1em}Naive SVD & 40.03 &	91.92	&83.87&	12.20&	9.03&	63.00&	48.43&	64.35&	51.60		\\
		\hspace{1em}+ Data whitening & 30.00&	91.92&	83.87&	57.32&	37.78&	72.93&	48.22&	64.35&	60.80		\\
		\hspace{1em}+ Bias calibration & 49.96&	91.92&	83.87&	62.80&	43.63&	76.80&	48.40&	64.35&	65.22		\\
		\bottomrule
	\end{tabular}
\end{table*}

\begin{table*}[t]
    \centering
	\caption{SVD sparsity estimator ablation study for TurboSparse-Mistral-Instruct.}		
    \label{tab:components-breakdown-mistral}
    \footnotesize
	\begin{tabular}{p{2.5cm}ccccccccc}
		\toprule
		& TriviaQA & ARC-E & ARC-C & Humaneval & MBPP & GSM8K & BBH & CMMLU & Average \\
		\midrule
		Dense baseline &48.58	&87.21&	71.67&	34.15&	42.20&	61.71&	42.35&	46.52&	54.30		\\
		\midrule
		\multicolumn{10}{l}{\textit{Parallel pipeline}} \\
		\hspace{1em}Naive SVD & 		 41.40&	87.04&	71.76&	24.39&	38.91&	60.20&	42.13&	46.34&	51.52		\\
		\hspace{1em}+ Data whitening &   37.98&	86.57&	71.25&	37.20&	41.79&	60.88&	42.27&	46.18&	53.02		\\
		\hspace{1em}+ Bias calibration & 46.21&	87.12&	71.76&	36.59&	41.89&	62.24&	42.44&	46.45&	54.34		\\
		\midrule
		\multicolumn{10}{l}{\textit{Sequential pipeline}} \\
		\hspace{1em}Naive SVD & 		 35.29&	71.09&	57.85&	1.83&	8.73&	9.17&	37.72&	25.24&	30.87		\\
		\hspace{1em}+ Data whitening & 	 18.09&	87.12&	72.27&	29.27&	28.44&	48.60&	41.87&	46.22&	46.49		\\
		\hspace{1em}+ Bias calibration & 43.26&	87.04&	71.76&	33.54&	42.71&	61.41&	42.32&	46.47&	53.56		\\
		\bottomrule
	\end{tabular}
\end{table*}

\section{Technical details}

\subsection{LLMs hyperparameter values}
We provide hyperparameter values for the tested models in \cref{tab:model-hyperparams}, as they influence E2E latency. Specifically, all evaluated optimization techniques modify only the FFN portion of inference, leaving the attention block unchanged. Thus, a higher intermediate size increases the FFN's contribution to E2E latency. Similarly, the use of Grouped Query Attention reduces the attention contribution to total inference time.
\begin{table*}[h]
	\caption{LLMs hyperparameter values.}
	\centering
	\begin{tabular}{lllll}
		\toprule
		& \multicolumn{4}{c}{hyperparameters}                   \\
		\cmidrule(r){2-5}
		Model     & hidden size     & intermediate size & n attention heads & n KV heads \\
		\midrule
		ProSparse-LLaMA2-7B &  4096 &  11008  & 32 & 32 \\
		TurboSparse-Mistral-Instruct & 4096 & 14336 & 32 & 8 \\
		SparseQwen2-7B     & 3584       & 18944  & 28 & 4 \\
		\bottomrule
	\end{tabular}
	\label{tab:model-hyperparams}
\end{table*}

\subsection{Predictor Building}
\label{sec:bias-calibration}

For calibration in total we use 112 few-shot samples ($\approx$20000 tokens) from the train splits of the GSM8K, ARC-E, ARC-C, OpenBookQA, QAsper, CodeXGLUE datasets.

\subsection{Predictor's rank}
\label{sec:rank-size}
One can trade off between the predicted sparsity ratio and predictor size. 
While a larger predictor allows for more sparse computations within gate projection, it requires more compute for prediction. 
In turn, the $s$ hyperparameter can be used to calibrate the predicted sparsity ratio. 
\cref{fig:roc-auc-on-ranks} shows how the separation capability of SVD predictors depends on their size.
\begin{figure}
	\centering
	\includegraphics[scale =0.7]{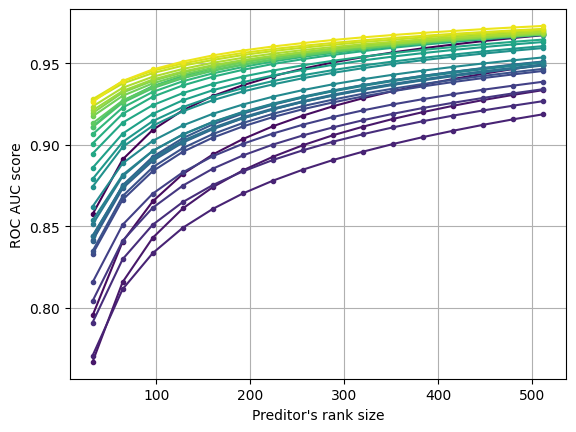}
	\caption{ROC AUC scores of SVD-predictors at different layers averaged across several sequence samples for ProSparse-LLaMA2-7B. The brighter curves represent deeper layers. For most of the layers a quality metric reaches a plateau at relatively small rank.}
	\label{fig:roc-auc-on-ranks}
\end{figure}
Due to this insight, higher acceleration can be achieved with relatively small predictors. Moreover, small predictors consume less additional memory. Similarly to $r$, we set $s$ to be constant across all layers. Empirically, we found that setting the rank to approximately 2\% of the intermediate size $D$ works well.


\end{document}